\newtheorem{thm}{Theorem}[section]
\newtheorem{cor}[thm]{Corollary}
\newtheorem{ass}[thm]{Assumption}
\newtheorem{prop}[thm]{Proposition}
\theoremstyle{definition}
\newtheorem{defn}[thm]{Definition}
\theoremstyle{remark}
\newcommand\norm[1]{\left\lVert#1\right\rVert}
\def\RR{\mathbb{R}}
\title{Approximation and interpolation of deep neural networks}
\author[1,2]{Vlad Raul Constantinescu}
\author[1,3]{Ionel Popescu}
\affil[1]{\small University of Bucharest, Faculty of Mathematics and Computer Science, 14 Academiei str., 70109, Bucharest, Romania,
14 Academiei str., 70109, Bucharest, Romania}
\affil[2]{“Gheorghe Mihoc – Caius Iacob” Institute of Mathematical Statistics and Applied Mathematics
of the Romanian Academy, 13 Calea 13 Septembrie, 050711 Bucharest, Romania}
\affil[3]{Simion Stoilow Institute of Mathematics of the Romanian Academy,
P.O. Box 1-764, RO-014700 Bucharest, Romania}
\begin{document}
\maketitle

\abstract{

In this paper, we  prove that in the overparametrized regime, deep  neural network  provide universal approximations and can interpolate any data set, as long as the activation function is locally in $L^1(\RR)$ and not an affine function. 

Additionally, if the activation function is smooth and such an interpolation networks exists, then the set of parameters which interpolate forms a manifold. Furthermore, we give a characterization of the Hessian of the loss function evaluated at the interpolation points.   

In the last section, we provide a practical probabilistic method of finding such a point under general conditions on the activation function.} 

\section{Introduction}

In light of the interpolation threshold outlined in the double descent phenomena of \cite{belkin2019reconciling}, a critical issue in neural networks is to have guarantees that interpolation is indeed achieved. We tackle this problem for the case of feedforward neural networks and demonstrate that this holds generally, as long as the activation function is not affine. To the best of our knowledge, this interpolation phenomenon has not been fully proved in the literature with such generality.

\subsection{Interpolation of deep neural networks} \label{s:2}
Recent advances have focused on understanding the locus of global minima for overparametrized neural networks \cite{cooper2021global, pinkus1999approximation, du2019gradient, oymak2020toward, arora2019fine, liu2022loss,belkin2021fit,allen2019convergence} when the activation function is continuous. Specifically, \cite{pinkus1999approximation} proves that a shallow neural network with a hidden layer of width at least $d$, where $d$ is the number of data points, and with a non-polynomial continuous activation function, can interpolate any dataset of $d$ points. We generalize the results of \cite{cooper2021global, pinkus1999approximation} for a broader class of activation functions and neural network architectures. We establish that in the overparametrized regime, any dataset consisting of $d$ points can be interpolated by a deep neural network with at least $d$ neurons in each hidden layer, where the activation function is locally integrable and not almost everywhere an affine function. Additionally, if the activation function is smooth, the locus of global minima of the loss landscape in an overparametrized neural network forms an $n-d$ dimensional submanifold of $\mathbb{R}^n$, provided the neural net has $n$ parameters and is trained on $d$ data points, with $n > d$.

We further explore the case of shallow neural networks with polynomial activation functions and show that under certain conditions, interpolation can be achieved.

\subsection{Universal Approximation and Network Density} \label{s:4}
In addition to interpolation, another significant aspect of neural networks that has been extensively studied is their capability for universal approximation \cite{pinkus1999approximation, cybenko1989approximation, hornik1991approximation,baum1988capabilities, kidger2020universal,hanin1710approximating,park2020minimum,johnson2018deep}, especially with general activation functions. While much of the literature focuses on ReLU activation or sigmoidal activation functions \cite{cybenko1989approximation,hanin1710approximating,park2020minimum}, \cite{pinkus1999approximation} demonstrates that the set of all shallow neural networks is dense in $C(\mathbb{R}^p)$, with respect to the topology of uniform convergence on compacts, provided the activation function is continuous and not polynomial. In this section, we extend these results, proving that neural networks of any depth are dense in $C(\mathbb{R}^p)$, assuming the activation function is continuous and not affine. A similar result can be found in \cite{kidger2020universal}, which proves the density property for deep neural networks with non-affine activation function and with a supplementary condition for the activation function of being differentiable in at least one point.

We also describe the eigenspectrum of the Hessian of the loss function of a shallow neural network evaluated at the global minima.

\subsection{Numerical Methods and Gradient Descent} \label{s:5}
A critical question is how one can find such an interpolation point. One of the numerical methods used for training neural networks is the (stochastic) gradient descent method. For convex problems, there is extensive literature on convergence results for (stochastic) gradient descent \cite{bubeck2015convex}. However, in the non-convex scenario, first-order methods like gradient descent can get stuck at a saddle point. We address this issue by reducing the minimization of a non-convex loss function to a simple linear regression problem, as demonstrated in \cite{oymak2020toward} for shallow neural networks using a smooth activation function with a bounded derivative and a number of hidden nodes of order $O(d \log^2(d))$. Employing similar techniques, we extend this result to shallow neural networks with a continuous activation function that is not a polynomial of degree less than $d-2$. The reduction is accomplished through a random initialization of the input-to-hidden weights and optimization over the output layer. Our result improves on the findings of \cite{oymak2020toward} by reducing the number of hidden neurons required to order $O(d \log(d))$. We then extend this approach to general activation functions which are not affine but with deep neural networks.

\section{Interpolation of deep neural networks}\label{s:2}
We consider a neural network of any architecture (eg., feedforward, convolutional, etc.), with weights $w=(w_1,w_2,\ldots)$ and biases $b=(b_1,b_2,\ldots)$. The number of weights and biases is $n$, and we train our neural network on $d$ data points $(x_i,y_i)_{i=\overline{1,d}}$, where $x_i \in \mathbb{R}^p$ and $y_i \in \mathbb{R}$. We assume that the $x_i$ are distinct and our neural network is overparametrized, i.e. $n>d$.

We denote by $f_{w,b}$ the function given by our neural network. For each data point $(x_i,y_i)$, we define $f_i(w,b)=f_{w,b}(x_i)-y_i$. We suppose that each $f_i(w,b)$ is smooth in $w$ and $b$. For example, if our neural network is feedforward, the smoothness of the activation function $\sigma$ implies the smoothness of $f_i(w,b)$.

For the training of our neural net, we use the mean squared loss function
\begin{alignat*}{1}
L(w,b)=\sum_{i=1}^{d}f_{i}(w,b)^{2}
\end{alignat*}

From our definition of the loss function, we observe that  $L(w,b) \geq 0$. If $M=L^{-1}(0)$ is nonempty, then $M$ is the locus of global minima of $L$.
Also, the locus of global minima can be written as
\begin{alignat*}{1}
M=\bigcap_{i=1}^{d}M_{i},
\end{alignat*}
where
\begin{alignat*}{1}
M_{i}=f_{i}^{-1}(0)
\end{alignat*}

The following theorem is a result of \cite{cooper2021global} which we state here for the case of smooth activation functions.

\begin{thm} \label{t:2.1}
In the framework above, the set $M=L^{-1}(0)$ is generically (that is, possibly after an arbitrarily small change to the data set) a smooth $n-d$ dimensional
submanifold (possibly empty) of $\mathbb{R}^n$.
\end{thm}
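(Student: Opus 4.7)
My plan is to reduce the statement to a direct application of Sard's theorem combined with the regular value (preimage) theorem. The critical observation is that the dependence of $M$ on the label vector $y = (y_1,\ldots,y_d)$ is affine: define the smooth map
\begin{equation*}
G : \mathbb{R}^n \longrightarrow \mathbb{R}^d, \qquad G(w,b) = \bigl(f_{w,b}(x_1),\ldots,f_{w,b}(x_d)\bigr),
\end{equation*}
which is smooth because each $f_{w,b}(x_i)$ is smooth in $(w,b)$ by assumption. Then by definition $f_i(w,b) = G(w,b)_i - y_i$, so that
\begin{equation*}
M \;=\; \bigcap_{i=1}^{d} f_i^{-1}(0) \;=\; G^{-1}(y).
\end{equation*}
Thus $M$ is the preimage of the label vector under a single smooth map from $\mathbb{R}^n$ to $\mathbb{R}^d$.

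Next, I would invoke Sard's theorem on $G$: the set of critical values of $G$ has Lebesgue measure zero in $\mathbb{R}^d$, and in particular the set of regular values is dense. Hence for any prescribed $y \in \mathbb{R}^d$ and any $\varepsilon > 0$ there exists $y' \in \mathbb{R}^d$ with $\|y - y'\| < \varepsilon$ such that $y'$ is a regular value of $G$. Replacing the original data set $(x_i, y_i)$ by the perturbed data set $(x_i, y_i')$ realises the "arbitrarily small change" in the statement while keeping the input points $x_i$ fixed. For such a regular value the submersion theorem gives that $G^{-1}(y')$ is either empty or an embedded smooth submanifold of $\mathbb{R}^n$ of codimension $d$, i.e.\ of dimension $n - d$, which is positive thanks to the overparametrization hypothesis $n > d$.

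The main and essentially only subtle point is interpreting the word "generically": Sard's theorem yields a full-measure (and in particular dense) set of admissible perturbed labels, so the statement holds after an arbitrarily small perturbation of $y$ in the Euclidean topology on $\mathbb{R}^d$. The "possibly empty" caveat is unavoidable, since this theorem says nothing about whether $y$ lies in the image of $G$; existence of interpolation points is a separate assertion, handled by the universal interpolation results stated in Section~\ref{s:2}. No further obstacle appears, as the smoothness of the activation function is exactly what makes $G$ smooth and hence allows Sard to be applied.
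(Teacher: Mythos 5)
Your proof is correct, and in fact the paper gives no proof of this theorem at all: it is quoted directly from \cite{cooper2021global}, whose argument is precisely the one you present (realize $M$ as the fiber $G^{-1}(y)$ of the smooth evaluation map, perturb the labels $y$ to a regular value using Sard's theorem, and apply the regular value theorem to get an empty set or an embedded submanifold of codimension $d$). No further comment is needed.
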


In this paper, we will prove that for a class of feedforward neural networks, the set $M=L^{-1}(0)$ is non-empty.
In this context, $f_{w,b}$ is written in matrix form as
\begin{alignat*}{1}
f_{w,b}(x)=W_{l}\sigma(W_{l-1}\sigma(\ldots\sigma(W_{1}x-b_{1})\ldots)-b_{l-1})-b_{l},
\end{alignat*}
where $W_i \in \mathcal{M}_{n_{i-1}\times n_i}(\mathbb{R}), b_i \in \mathbb{R}^{n_i}$ and $n_0=p, n_l=1$. Moreover, we use the convention that $\sigma$ applied to a vector is simply the component-wise evaluation:
\begin{alignat*}{1}
\sigma(v_{1},v_{2},\ldots,v_{k})=(\sigma(v_{1}),\sigma(v_{2}),\ldots,\sigma(v_{k})).
\end{alignat*}

\subsection{The general case of activation functions} \label{s:generals}

\subsubsection{The non-polynomial case and shallow networks}

When the activation function $\sigma$ is continuous and not a polynomial, any shallow neural network, i.e. a feedforward neural network with one hidden layer, can interpolate any data set \cite{pinkus1999approximation}. In this paper, we will prove first the same result for activation functions $\sigma$, which satisfy the following assumption:

\begin{ass}\label{A:1}
The activation function $\sigma$ is locally integrable, i.e $\sigma \in L^1_{loc}(\mathbb{R})$, and is almost surely not a polynomial  of degree less or equal than $d-2$, i.e. there exists no polynomial $P$ of degree at most $d-2$ such that $\sigma=P$ almost surely.
\end{ass}

\begin{thm}\label{p:2.3}
Let $(x_i,y_i)_{i=\overline{1,d}}$ be a data set  with $x_i \in \mathbb{R}^p, y_i \in \mathbb{R}$, and with $x_i$ assumed distinct. Assume that $\sigma$ satisfies Assumption \ref{A:1}. Then, for any $h \geq d$, there exists a shallow neural network with width $h\geq d$, with activation function $\sigma$ such that it interpolates our data set, i.e. $f_{w,b}(x_i)=y_i$ for all $i$. 
\end{thm}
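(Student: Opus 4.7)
The plan is to reduce the interpolation problem to the solvability of a $d \times d$ linear system whose coefficient matrix is built from evaluations of $\sigma$. Since the $x_i$ are distinct, a generic $v \in \mathbb{R}^p$ separates them, so that $\alpha_i := v^{\top} x_i$ are $d$ distinct reals (the exceptional set of $v$'s is a finite union of hyperplanes). Restricting the first-layer weights to $w_j = c_j v$ and setting the output bias $b_2 = 0$, interpolation reduces to finding scalars $c_j, \beta_j, a_j$ with
\[
\sum_{j=1}^d a_j\, \sigma(c_j \alpha_i - \beta_j) = y_i, \qquad i = 1, \ldots, d,
\]
using $d$ of the $h \geq d$ hidden neurons (the remaining $h-d$ get zero output weight). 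It therefore suffices to exhibit $d$ pairs $(c_j,\beta_j)$ for which $A := [\sigma(c_j \alpha_i - \beta_j)]_{i,j=1}^d$ is nonsingular, since then $a = A^{-1} y$ produces the output weights. Equivalently, the family $\Phi(c,\beta) := (\sigma(c\alpha_i - \beta))_{i=1}^d \in \mathbb{R}^d$ must linearly span $\mathbb{R}^d$ as $(c,\beta)$ varies.

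Suppose for contradiction that some nonzero $u \in \mathbb{R}^d$ annihilates this span, so $F(c,\beta) := \sum_{i=1}^d u_i \sigma(c\alpha_i - \beta) = 0$ for almost every $(c,\beta) \in \mathbb{R}^2$. Since $\sigma \in L^1_{loc}(\mathbb{R})$, one has $F \in L^1_{loc}(\mathbb{R}^2)$. Convolving in the $\beta$-variable with an even $C^\infty$ mollifier $\phi_\delta$, and using $\phi_\delta(-s) = \phi_\delta(s)$ after a short change of variables, yields
\[
\int_{\mathbb{R}} F(c,\beta')\, \phi_\delta(\beta - \beta')\, d\beta' \;=\; \sum_{i=1}^d u_i\, \sigma_\delta(c\alpha_i - \beta), \qquad \sigma_\delta := \sigma * \phi_\delta \in C^\infty(\mathbb{R}).
\]
The left-hand side vanishes almost everywhere (as $F$ does), while the right-hand side is jointly continuous in $(c,\beta)$; hence $\sum_{i=1}^d u_i \sigma_\delta(c\alpha_i - \beta) = 0$ for every $(c,\beta) \in \mathbb{R}^2$.

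Differentiating $k$ times in $c$ and evaluating at $c = 0$ gives the factorised identity
\[
\sigma_\delta^{(k)}(-\beta) \cdot \sum_{i=1}^d u_i\, \alpha_i^k = 0 \qquad \forall \beta \in \mathbb{R},
\]
so whenever $\sigma_\delta^{(k)} \not\equiv 0$ the moment equation $\sum_i u_i \alpha_i^k = 0$ follows. The goal is to obtain these for $k = 0, 1, \ldots, d-1$, after which the Vandermonde determinant in the distinct $\alpha_i$ forces $u = 0$, contradicting $u \neq 0$. For this I invoke Assumption \ref{A:1}: if $\sigma_\delta$ were a polynomial of degree $\leq d-2$ for every small $\delta$, then, since such polynomials form a closed, finite-dimensional subspace of $L^1_{loc}$ and $\sigma_\delta \to \sigma$ in $L^1_{loc}$, $\sigma$ itself would be such a polynomial almost everywhere, contradicting the hypothesis. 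So some $\delta_0 > 0$ makes $\sigma_{\delta_0}$ not a polynomial of degree $\leq d-2$; equivalently, $\sigma_{\delta_0}^{(k)} \not\equiv 0$ for every $k \leq d-1$ (else $\sigma_{\delta_0}$ would be a polynomial of degree $\leq k-1 \leq d-2$). This closes the contradiction.

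The main obstacle I anticipate is reconciling the $L^1_{loc}$ nature of $\sigma$, whose pointwise values are defined only almost everywhere, with the need to evaluate $\sigma$ at specific network inputs. Mollification bypasses this cleanly by transporting the problem to a smooth companion $\sigma_{\delta_0}$ where classical differentiation and pointwise evaluation are unproblematic, and the non-polynomial hypothesis is inherited thanks to the closedness of polynomial subspaces of $L^1_{loc}$. The remaining ingredients --- the generic one-dimensional reduction, the Vandermonde inversion, and the extraction of concrete parameters $(c_j,\beta_j)$ from the span statement --- are standard.
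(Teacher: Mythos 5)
Your proposal is correct and follows essentially the same route as the paper's proof: reduce to one dimension via a separating direction, reduce interpolation to linear independence of the functions $(a,b)\mapsto\sigma(at_i-b)$, mollify to pass to a smooth $\sigma_\delta$ that is still not a polynomial of degree $\le d-2$ (using that this polynomial space is closed in $L^1_{loc}$), differentiate in the slope variable, evaluate at $0$, and conclude by a Vandermonde argument. Your write-up is in fact slightly more explicit than the paper's at the two delicate points (transferring the dependence relation to $\sigma_\delta$ by convolving in the bias variable, and the factorization $\sigma_\delta^{(k)}(-\beta)\sum_i u_i\alpha_i^k=0$), but the argument is the same.
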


\begin{proof}
The idea is to refine the proof of Theorem 5.1 from \cite{pinkus1999approximation}.  
The output function of a shallow neural network is written in the matrix form as
\begin{alignat*}{1}
f_{w,b}(x)=v^{T}\sigma(Wx-b)-b',
\end{alignat*}
where $W \in \mathcal{M}_{h \times p}(\mathbb{R}), v \in \mathbb{R}^h, b \in \mathbb{R}^h$ and $b' \in \mathbb{R}$. The entries of $w$ and $v$ are the weights, and  the entries of $b,b'$ are the biases.
For our neural net, we take $b'=0$ and $h=d$. If $h>d$, then we set the weights and biases after the first $d$ nodes to be equal to zero. Hence, we can reduce our construction to $h=d$. 
Let $w_1,\ldots,w_d$ be the lines of $W$ and $v=(v_1,\ldots, v_d)$. Since the $x_i$ are distinct, we can find a vector $w \in \mathbb{R}^p$ such that $w^Tx_i=t_i$ are distinct for all $i$. We set $w_i=a_i^T w$ for some $a_i \in \mathbb{R}$, $i=\overline{1,d}$. Therefore, we have to show that there exists $(a_i,b_i,v_i)_{i=\overline{1,d}}$ such that
\begin{alignat*}{1}
\sum_{j=1}^{d}v_{j}\sigma(a_{j}t_{i}-b_{j})=y_{i},
\end{alignat*}
for all $i$. This interpolation problem is equivalent to proving the linear independence (over $a$ and $b$) of the  $d$ functions $\sigma(at_{i}-b)$. If we have linear independence of these functions, then we can find $(a_i,b_i)_{i=\overline{1,d}}$ such that the matrix system of our interpolation problem
\begin{alignat*}{1}
\left(\begin{array}{ccc}
\sigma(a_{1}t_{1}-b_{1}) & \dots & \sigma(a_{d}t_{1}-b_{d})\\
\vdots & \ddots & \vdots\\
\sigma(a_{1}t_{d}-b_{1}) & \ldots & \sigma(a_{d}t_{d}-b_{d})
\end{array}\right)
\end{alignat*}
is nonsingular. And from here we can determine $(v_1,\ldots, v_d)$ uniquely. Suppose that our $d$ functions are linearly dependent. This means that we can find nontrivial coefficients $(c_i)_{i=\overline{1,d}}$ such that
\begin{equation}
\label{eqn:1}
\sum_{i=1}^{d}c_{i}\sigma(at_{i}-b)=0.
\end{equation}
Let $\zeta \in C_{0}^{\infty}(\RR,[0,\infty))$, i.e. $\zeta$ is non-negative,  infinitely differentiable with compact support and $\int_{\RR}\zeta(x)dx=1$. We define for $\epsilon>0$, the following function
\begin{alignat*}{1}
\sigma_{\epsilon}(t)=\int_{\RR}\frac{1}{\epsilon}\zeta\left(\frac{t-x}{\epsilon}\right)\sigma(x)dx
\end{alignat*}
Since $\sigma \in L^1_{loc}(\mathbb{R})$, standard arguments show that 
\[
\sigma_\epsilon \xrightarrow[\epsilon \to0]
{L^1_{loc}} \sigma
\]
In particular, we also have along a subsequence $\epsilon_n$ which converges to $0$ such that $\sigma_{\epsilon_n}$ converges to $\sigma$ almost surely.  
The key observation is that if $\sigma_{\epsilon_n}$ is a polynomial of degree less than $d-2$ for every $n$, then in the limit we also have that $\sigma$ is almost surely a polynomial of degree at most $d-2$.  

Consequently, we can reduce the problem to the case where $\sigma$ is replaced by $\sigma_{\epsilon}$ for some $\epsilon>0$.  Using now \eqref{eqn:1} we will continue to have the same relation also for $\sigma_\epsilon$.  Thus from now on we simply assume that $\sigma$ is smooth and \eqref{eqn:1} is satisfied.  
If we differentiate $k$ times relation \eqref{eqn:1} with respect to $a$, we get
\begin{alignat*}{1}
\sum_{i=1}^{d}c_{i}t_i^k\sigma^{(k)}(at_{i}-b)=0.
\end{alignat*}
Since $\sigma$ is not a polynomial of degree less or equal than $d-2$, for any $k=\overline{0,d-1}$ we can find $b_k \in \mathbb{R}$ such that $\sigma^{(k)}(-b_k) \neq 0$.  Taking $a=0$ and $b=b_k$ for each equation, we get a system of $d$ equations
\begin{equation}
\label{eqn:2}
\sum_{i=1}^{d}c_{i}t_i^k=0,
\end{equation}
for each $k=\overline{0,d-1}$. Since the matrix system of  (2) is a Vandermonde matrix, and the $t_i$ are distinct, we get that all $c_i$ must be equal to $0$, which is a contradiction.
\end{proof}

\subsubsection{The general non-affine activation functions and deep neural networks}

\begin{ass}\label{A:2}
The activation function $\sigma$ is locally integrable, i.e $\sigma \in L^1_{loc}(\mathbb{R})$, and is almost surely non-affine, i.e. we can not find $a,b\in\RR$ such that $\sigma(x)=ax+b$ almost surely.  
\end{ass}

Now we will extend the interpolation property from shallow neural networks to the class of all deep feedforward neural networks. More precisely, we have the following result

\begin{thm}\label{p:2.4}
Let $(x_i,y_i)_{i=\overline{1,d}}$ be a data set  with $x_i \in \mathbb{R}^p, y_i \in \mathbb{R}$, and with $x_i$ assumed distinct. Assume that $\sigma$ satisfies Assumption \ref{A:2}. Then there exists a feedforward neural network with activation function $\sigma$ that interpolates our data set.
\end{thm}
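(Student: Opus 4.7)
The plan is to reduce Theorem \ref{p:2.4} to Theorem \ref{p:2.3} by a case analysis based on whether $\sigma$ already satisfies Assumption \ref{A:1}. If $\sigma$ is not almost surely equal to any polynomial of degree at most $d-2$, then Assumption \ref{A:1} holds and Theorem \ref{p:2.3} directly produces an interpolating shallow (hence feedforward) network, so there is nothing more to do.

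The genuinely new case is when $\sigma$ agrees almost surely with a polynomial $P$ of some degree $k$, necessarily with $k \ge 2$ by the non-affineness in Assumption \ref{A:2} and $k \le d-2$ (otherwise we are back in the first case). My idea is to exploit depth to inflate the effective degree of the activation. Since $P^{\circ L}$ is a polynomial of degree $k^L$, which grows exponentially in $L$, I choose $L := \lceil \log_k(d-1) \rceil$ so that the iterated activation $\tilde\sigma := P^{\circ L}$ has degree at least $d-1$ and therefore satisfies Assumption \ref{A:1}. Applying Theorem \ref{p:2.3} to $\tilde\sigma$ yields parameters $w_1,\dots,w_d \in \RR^p$, $b_1,\dots,b_d \in \RR$ and $v_1,\dots,v_d \in \RR$ such that $\sum_{j=1}^d v_j \tilde\sigma(w_j^T x_i - b_j) = y_i$ for every $i$.

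Next I realize this expression as the output of an explicit feedforward network with activation $\sigma$. Take $L$ hidden layers, each of width $d$: the first layer uses weight matrix $W_1$ with rows $w_j^T$ and bias $(b_1,\dots,b_d)^T$, while each subsequent layer $\ell = 2,\dots,L$ has weight matrix $I_d$ and zero bias; the output layer is linear with coefficients $(v_j)$ and zero bias. A one-line unrolling shows that the $j$-th coordinate after $L$ layers is $\sigma^{\circ L}(w_j^T x - b_j)$, so the network output at $x_i$ is exactly $\sum_j v_j \sigma^{\circ L}(w_j^T x_i - b_j)$.

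The one serious obstacle is that $\sigma = P$ only almost surely, so the pointwise network output might disagree with the $\tilde\sigma$-computation if some internal pre-activation lands in the null set $N := \{t : \sigma(t) \ne P(t)\}$. I plan to dispense with this by a perturbation argument: since each iterate $P^{\circ m}$ is a non-constant polynomial and the $x_i$ are distinct, the locus of first-layer parameters for which any intermediate pre-activation hits $N$ has Lebesgue measure zero, while Theorem \ref{p:2.3} guarantees that the set of parameters for which the shallow $\tilde\sigma$-interpolation matrix is nonsingular is open and nonempty. Intersecting these two generic sets produces admissible parameters for which the deep $\sigma$-network realizes exact interpolation, completing the proof.
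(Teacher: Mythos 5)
Your proposal is correct and follows essentially the same route as the paper: when $\sigma$ agrees almost surely with a polynomial of degree $2\le k\le d-2$, you use depth to iterate the activation until its degree exceeds $d-2$ and then invoke Theorem \ref{p:2.3} for the composed activation $\tilde\sigma=P^{\circ L}$, exactly as the paper does with its width-$d$ hidden layers and scalar intermediate weights. Your closing genericity argument, which handles the null set where $\sigma\ne P$ so that the pointwise network output matches the polynomial computation, is a legitimate refinement of a point the paper passes over in silence.
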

\begin{proof}
If $\sigma$ is not a polynomial function, then we know from Theorem \ref{p:2.3} that there exists a shallow neural network that interpolates our data set. So it remains to study the case when $\sigma$ is a polynomial of degree greater than one. The strategy will be to reduce this case to the case of Theorem \ref{p:2.3}. Let $f_{w,b}$ be a neural network with $l$ hidden layers and each hidden layer has width $d$. On the first hidden layer, we compute $\sigma (W_1 x-b_1)$, Moving to the subsequent hidden layers, the procedure entails multiplying each element from the preceding hidden layer by a scalar $w_2$ and applying the activation function $\sigma$. This process is repeated for the remaining hidden layers.
\begin{center}
\begin{tikzpicture}[node distance=.7cm and .7cm]

\foreach \x in {1,...,4}
  \node[circle,draw,minimum size=1cm] (Input-\x) at (\x*2cm+3cm,0) {$x_\x$};

\foreach \x in {1,...,7}
  \node[circle,draw,minimum size=1cm] (Hidden1-\x) [below=of Input-2] at (\x*2cm,-1.5cm) {};

\foreach \x in {1,...,7}
  \node[circle,draw,minimum size=1cm] (Hidden2-\x) [below=of Hidden1-\x] {};

\foreach \x in {1,...,7}
  \node[circle,draw,minimum size=1cm] (Hidden3-\x) [below=of Hidden2-\x] {};

\node[circle,draw,minimum size=1cm] (Output) [below=of Hidden3-4] {$y$};

\foreach \x in {1,...,7} {
  \node (Act1-\x) 
  at ([shift={(-75:.7cm)}]Hidden1-\x) {$\sigma$};
  \node (Act2-\x) 
  at ([shift={(-75:.7cm)}]Hidden2-\x) {$\sigma$};
}
\foreach \i in {1,...,4} {
  \foreach \j in {1,...,7} {
    \draw[-latex] (Input-\i) -- (Hidden1-\j);
  }
}
\foreach \i in {1,...,7} {
    \draw[-latex] (Hidden1-\i) -- (Hidden2-\i);
    \draw[-latex] (Hidden2-\i) -- (Hidden3-\i);
    \draw[-latex] (Hidden3-\i) -- (Output);
}
\end{tikzpicture}
\captionof{figure}{Example of such a neural network arhitecture}
\label{fig:neural_network}
\end{center}

Such a network can be represented as 
\begin{alignat*}{1}
f_{w,b}(x)=v^{T}g(W_{1}x-b_{1})
\end{alignat*}
where $g(x)=\sigma(w_{l-1}\sigma(\ldots \sigma(w_{3}\sigma(w_{2}x))\ldots)$.
Since $\sigma$ is a polynomial of degree $m$, where $m>1$, we can choose $w_i$  such that $g$ will be a polynomial function of degree $m^{l-1}$. Choosing $l$ such that $m^{l-1}>d-2$, the interpolation problem of a deep neural network with activation function $\sigma$ is reduced to a shallow neural network with activation function $g$ that satisfies the conditions of Theorem \ref{p:2.3}.
\end{proof}

With these settings described above, we have the following consequence.

\begin{cor} \label{t:2.6}
Let $(x_i,y_i)_{i=\overline{1,d}}$ be a data set  with $x_i \in \mathbb{R}^p, y_i \in \mathbb{R}$, and the $x_i$ are distinct. Assume that the activation function $\sigma$ is smooth and not affine. Let $L$ be the mean squared loss function of a feedforward neural network with activation function $\sigma$. Then, there exists a feedforward neural network with activation function $\sigma$ such the set $M=L^{-1}(0)$ is generically (that is, possibly after an arbitrarily small change to the data set) a smooth $n-d$ dimensional
submanifold nonempty of $\mathbb{R}^n$.
\end{cor}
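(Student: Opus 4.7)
The plan is to combine Theorem \ref{p:2.4}, which produces an interpolating architecture, with Theorem \ref{t:2.1}, which provides the submanifold structure, and then check that a small perturbation chosen to ensure the submanifold property does not destroy nonemptiness.

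First, since $\sigma$ is smooth it is in $L^1_{loc}(\mathbb{R})$, and since it is not affine, it satisfies Assumption \ref{A:2}. Theorem \ref{p:2.4} therefore furnishes a feedforward architecture with activation $\sigma$ that interpolates $(x_i,y_i)_{i=\overline{1,d}}$: the shallow network of width $d$ from Theorem \ref{p:2.3} when $\sigma$ is not a polynomial, or the deep network with $l$ hidden layers of width $d$ and $m^{l-1}>d-2$ when $\sigma$ is a polynomial of degree $m\geq 2$. I would fix this architecture and, if needed, enlarge it by appending a few neurons with zero outgoing weights, so that the total number of parameters $n$ satisfies $n>d$. Such padding leaves the interpolation property intact.

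Next, I would apply Theorem \ref{t:2.1} to this fixed architecture. Because $\sigma$ is smooth, each $f_i(w,b)$ is smooth in $(w,b)$, so the theorem applies and asserts that, after an arbitrarily small change of the data set, $M=L^{-1}(0)$ is a smooth $n-d$ dimensional submanifold of $\mathbb{R}^n$ (a priori possibly empty).

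The only real point requiring care, and the place where I expect the main subtlety, is preservation of nonemptiness under the perturbation used to obtain the submanifold property. The key observation is that the construction behind Theorem \ref{p:2.4} depends on the data only through the distinctness of the $x_i$ and uses an architecture determined solely by $d$ and by whether $\sigma$ is polynomial; the target values $y_i$ are arbitrary. Since distinctness of the $x_i$ is an open condition, any sufficiently small perturbation of the data set still admits an interpolating choice of parameters in the same architecture, so $M$ remains nonempty. One can therefore pick a perturbation that is simultaneously generic (giving the submanifold property from Theorem \ref{t:2.1}) and small enough to preserve nonemptiness, yielding the claim.
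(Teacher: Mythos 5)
Your proposal is correct and follows exactly the paper's route: the paper's entire proof is the one-line observation that the corollary follows from Theorem \ref{t:2.1} together with Theorem \ref{p:2.4}. The extra care you take in checking that the generic perturbation preserves nonemptiness (because the interpolation construction works for arbitrary $y_i$ and only needs the $x_i$ distinct, an open condition) is a detail the paper leaves implicit, and it is handled correctly.
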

\begin{proof}
This is a consequence of Theorem \ref{t:2.1} and Theorem \ref{p:2.4}.  
\end{proof}

\subsubsection{Extensions of interpolation for polynomial activation function}\label{s:3}
If the activation function $\sigma$ is a non-constant polynomial, the interpolation problem depends very much on the $x_i$ and the degree of $\sigma$. More precisely, we have the following result

\begin{prop} \label{p:3.1}
Let $(x_i,y_i)_{i=\overline{1,d}}$ be a data set  with $x_i \in \mathbb{R}^p, y_i \in \mathbb{R}$, and the $x_i$ are distinct. If $\sigma$ is a polynomial of degree $m$, then we have the following two statements
\begin{enumerate}
\item  If $d>\sum_{k=1}^{m}\binom{p+k-1}{k}$, then the interpolation problem in Theorem \ref{p:2.3} is not possible.
\item If $d\leq \sum_{k=1}^{m}\binom{p+k-1}{k}$ and $(1,x_i,x_{i}^{\otimes 2},\ldots, x_{i}^{\otimes m})_{i=\overline{1,d}}$ are linearly independent, then the interpolation problem in Theorem \ref{p:2.3} is possible.
\end{enumerate}
\end{prop}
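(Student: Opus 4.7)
The plan is to exploit the polynomial nature of $\sigma$ to identify the image of a shallow network with a finite-dimensional space of polynomials, and then reduce the interpolation question to standard linear algebra on this space. When $\sigma$ has degree $m$, every realisable output $f_{w,b}(x)=\sum_{j}v_{j}\sigma(w_{j}^{T}x-b_{j})$ is a polynomial in $x$ of total degree at most $m$. Let $V$ denote this space of polynomials in $p$ variables; it has dimension $\binom{p+m}{m}=1+\sum_{k=1}^{m}\binom{p+k-1}{k}$ in the monomial basis.

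The first key step is to show the network actually realises every element of $V$, that is, the span of the ridge polynomials $\{\sigma(w^{T}x-b):w\in\mathbb{R}^{p},\ b\in\mathbb{R}\}$ equals $V$. Since $V$ is finite-dimensional this span is closed, so successive difference quotients in the bias $b$ place every derivative $\sigma^{(k)}(w^{T}x-b)$, $k=0,\ldots,m$, in the span. The top derivative $\sigma^{(m)}$ is a nonzero constant, putting the constants in the span; then descending on $k$, the leading $x$-piece of $\sigma^{(k)}(w^{T}x-b)$ is proportional to $(w^{T}x-b)^{m-k}$, whose top homogeneous component $(w^{T}x)^{m-k}$ sweeps out $\mathrm{Sym}^{m-k}(\mathbb{R}^{p})$ as $w$ varies, so induction on degree produces every monomial of $V$.

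With this identification, interpolation amounts to finding $P\in V$ with $P(x_{i})=y_{i}$ for every $i$. Let $\mathrm{ev}:V\to\mathbb{R}^{d}$, $P\mapsto(P(x_{1}),\ldots,P(x_{d}))$, be the evaluation map. The data can be interpolated for every target iff $\mathrm{ev}$ is surjective, equivalently iff the $d$ point-evaluation functionals $\delta_{x_{i}}:P\mapsto P(x_{i})$ are linearly independent in $V^{*}$; in the monomial basis of $V$ these functionals are represented exactly by the vectors $(1,x_{i},x_{i}^{\otimes 2},\ldots,x_{i}^{\otimes m})$ of monomials up to degree $m$ evaluated at $x_{i}$. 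For part (1), once $d$ exceeds the dimension of $V^{*}$ the $d$ functionals must be linearly dependent and $\mathrm{ev}$ fails to be surjective, so some $(y_{i})$ admits no interpolant. For part (2), the hypothesised linear independence of the vectors $(1,x_{i},\ldots,x_{i}^{\otimes m})$ is precisely the statement that $\mathrm{ev}$ has full rank $d$, hence is surjective, and any preimage $P\in V$ is realised as a shallow network via the spanning property of Step 1.

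The main obstacle is Step 1: verifying that the ridge polynomials $\sigma(w^{T}x-b)$ genuinely span all of $V$ rather than a proper subspace. The derivative trick is classical but needs careful degree-by-degree bookkeeping to ensure every monomial is eventually captured; once this is done, both parts of the proposition follow from a clean linear algebra argument together with dimension counting.
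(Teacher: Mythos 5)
Your argument is correct in substance but runs in the opposite direction from the paper's. The paper works entirely on the parameter side: it takes the (already established) equivalence between interpolability and linear independence of the $d$ functions $(w,b)\mapsto\sigma(w^{T}x_{i}-b)$, expands $\sigma(w^{T}x_{i}-b)$ in powers of $w^{T}x_{i}$, and matches coefficients to reduce the dependence relation to $\sum_{i}c_{i}x_{i}^{\otimes k}=0$ for $k=0,\dots,m$; a dimension count on $\bigoplus_{k}\mathrm{Sym}^{k}(\mathbb{R}^{p})$ then gives both parts. You instead work on the function side: you first prove that the ridge functions $\sigma(w^{T}x-b)$ span the whole space $V$ of degree-$\le m$ polynomials (your Step 1, which is correct and is essentially the derivative/difference-quotient argument the paper uses later in its density section), and then phrase interpolability as surjectivity of the evaluation map $V\to\mathbb{R}^{d}$, whose dual formulation recovers the same vectors $(1,x_{i},\dots,x_{i}^{\otimes m})$. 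The two routes are dual to one another and both bottom out in the same linear algebra; yours costs the extra spanning lemma but gives as a by-product an exact description of the realisable function class, while the paper's avoids Step 1 entirely.

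Two loose ends. First, in part (2) your construction realises the interpolating polynomial $P$ as a combination of up to $\dim V$ ridge functions, whereas Theorem \ref{p:2.3} asks for width $h\ge d$, and $d$ may be smaller than $\dim V$. The fix is standard and implicit in the paper's route: since the vectors $\bigl(\sigma(w^{T}x_{1}-b),\dots,\sigma(w^{T}x_{d}-b)\bigr)$ span $\mathbb{R}^{d}$, you can extract $d$ parameter pairs $(w_{j},b_{j})$ making the $d\times d$ matrix $\bigl(\sigma(w_{j}^{T}x_{i}-b_{j})\bigr)$ nonsingular, and then solve for the output weights. You should add this step. Second, your (correct) count $\dim V=\binom{p+m}{m}=1+\sum_{k=1}^{m}\binom{p+k-1}{k}$ means your part (1) argument forces dependence of the evaluation functionals only when $d$ exceeds $1+\sum_{k=1}^{m}\binom{p+k-1}{k}$, not the stated threshold $\sum_{k=1}^{m}\binom{p+k-1}{k}$; the borderline case $d=1+\sum_{k=1}^{m}\binom{p+k-1}{k}$ is not covered. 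The paper's own proof has the identical off-by-one (it identifies the ambient dimension with $\sum_{k=1}^{m}\binom{p+k-1}{k}$, omitting the $k=0$ summand), so this is a discrepancy with the statement rather than a defect specific to your approach, but your explicit computation of $\dim V$ makes it visible and you should reconcile it.
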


\begin{proof}
Since the interpolation problem is equivalent to proving that the functions $\sigma( w^{T} x_i-b)$ are linearly independent (over $w$ and $b$), we will show that one can find nontrivial coefficients $(c_i)_{i=\overline{1,d}}$ such that

\begin{equation}\label{e:3}
\sum_{i=1}^{d}c_{i}\sigma( w^{T} x_i-b)=0,
\end{equation}
for any $w \in \mathbb{R}^p$ and $b \in \mathbb{R}$.
Since $\sigma$ is a polynomial of degree $m$, equation \eqref{e:3} is equivalent to 
\begin{equation}\label{e:5}
\sum_{i=1}^{d}c_{i} (w^{T} x_i)^{k}=0,
\end{equation}
for any $k=\overline{0,m}$ and $w \in \mathbb{R}^{p}$. And equation \eqref{e:5} is equivalent to
\begin{equation}
\sum_{i=1}^{d}c_{i}x_{i}^{\otimes k}=0,
\end{equation}
for any $k=\overline{0,m}$. Thus,  our problem is reduced to finding a linear combination of the elements $(1,x_i,x_{i}^{\otimes 2},\ldots, x_{i}^{\otimes m})$.   

It is well known that Sym$^k(\mathbb{R}^{p})$, i.e. the space of symmetric tensors of order $k$, is spanned by elements of the form $v^{\otimes k}$ and has dimension $\binom{p+k-1}{k}$. Consequently, if the number of data points $x_i$ is bigger than the dimension of $\bigoplus_{k=1}^m$Sim$^k(\mathbb{R}^{p})$, which is $\sum_{k=1}^{m}\binom{p+k-1}{k}$, then we can find a linear dependence.

On the other hand, if $d\le \sum_{k=1}^{m}\binom{p+k-1}{k}$, and the vectors $(1,x_i,x_{i}^{\otimes 2},\ldots, x_{i}^{\otimes m})_{i=\overline{1,d}}$ are linearly independent, then $c_i$ must all be equal to $0$, thus the interpolation is possible.  
\end{proof}

As the following results in this section heavily rely on property 2 stated in Proposition \ref{p:3.1}, we introduce the following assumption.

\begin{ass} \label{a:3.2}
Let $(x_i,y_i)_{i=\overline{1,d}}$ be a data set  with $x_i \in \mathbb{R}^p, y_i \in \mathbb{R}$. We require that $d\leq \sum_{k=1}^{m}\binom{p+k-1}{k}$ and also that $(1,x_i,x_{i}^{\otimes 2},\ldots, x_{i}^{\otimes m})_{i=\overline{1,d}}$ are linearly independent.
\end{ass}

With these settings described above, we have the following consequence.

\begin{cor} \label{t:3.4}
Let $(x_i,y_i)_{i=\overline{1,d}}$ be a data set and $\sigma$ a polynomial function. Assume that the activation function $\sigma$ and our data set satisfy Assumption \ref{a:3.2}. Let $L$ be the mean squared loss function of a shallow neural network with the hidden layer of width $h \geq d$. Then, the set $M=L^{-1}(0)$ is generically (that is, possibly after an arbitrarily small change to the data set) a smooth $n-d$ dimensional
submanifold nonempty of $\mathbb{R}^n$.
\end{cor}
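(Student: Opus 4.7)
The plan is to argue in direct analogy with Corollary \ref{t:2.6}, combining Theorem \ref{t:2.1} with the interpolation statement of Proposition \ref{p:3.1}(2). Since $\sigma$ is a polynomial, it is in particular smooth, so each $f_i(w,b)$ is smooth in the parameters and Theorem \ref{t:2.1} applies to the mean squared loss $L$. That already delivers the manifold structure: generically, $M = L^{-1}(0)$ is either empty or a smooth submanifold of $\mathbb{R}^n$ of dimension $n-d$. The only thing left to verify is that, under Assumption \ref{a:3.2}, $M$ is in fact nonempty.

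For this, I would quote Proposition \ref{p:3.1}(2) with the reduction used in the proof of Theorem \ref{p:2.3}. As in that argument, it suffices to treat the case $h = d$, since for $h > d$ we can zero out the weights and biases attached to the extra hidden nodes. Proposition \ref{p:3.1}(2) guarantees that the functions $(w,b) \mapsto \sigma(w^{T} x_i - b)$ are linearly independent as $i$ varies, so we can pick parameters $(w_j, b_j)_{j=1}^{d}$ for which the matrix $\bigl(\sigma(w_j^{T} x_i - b_j)\bigr)_{i,j=1}^{d}$ is nonsingular. The output weights $v = (v_1, \ldots, v_d)$ are then determined uniquely by solving the resulting linear system against $(y_1, \ldots, y_d)$, producing an explicit parameter point at which all $f_i$ vanish, hence an explicit point of $M$.

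Putting the two ingredients together gives the corollary. I do not expect any real obstacle, since both Theorem \ref{t:2.1} and Proposition \ref{p:3.1}(2) are already in hand; the content of the statement is precisely to combine nonemptiness (from Proposition \ref{p:3.1}(2)) with the generic submanifold structure (from Theorem \ref{t:2.1}) in the polynomial setting, where the non-polynomiality hypothesis of Theorem \ref{p:2.4} fails and must be replaced by the tensor-independence condition of Assumption \ref{a:3.2}.
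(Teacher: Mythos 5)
Your proposal is correct and follows exactly the paper's route: the paper's proof of this corollary is literally the one-line combination of Theorem \ref{t:2.1} (generic submanifold structure) with Proposition \ref{p:3.1}(2) (nonemptiness via interpolation under Assumption \ref{a:3.2}). The extra detail you supply --- reducing to $h=d$ and passing from linear independence of the functions $\sigma(w^{T}x_i-b)$ to a nonsingular interpolation matrix --- is precisely the reduction already established in the proofs of Theorem \ref{p:2.3} and Proposition \ref{p:3.1}, so nothing is missing.
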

\begin{proof}
This is a consequence of Proposition \ref{p:3.1} and Theorem \ref{t:2.1}.
\end{proof}

\section{Density of deep neural networks}
Let $\mathcal{M}_l$ be the set of all feedforward neural networks with $l$ hidden layers, i.e.,
\begin{align*}
\mathcal{M}_{l}=\left\{ f_{w,b}:\mathbb{R}^{p}\rightarrow\mathbb{R}|f_{w,b}(x)=W_{l}\sigma(W_{l-1}\sigma(\ldots\sigma(W_{1}x-b_{1})\ldots)-b_{l-1})-b_{l}, \forall W_i \in \mathcal{M}_{n_{i-1}\times n_i}(\mathbb{R}), \forall b_i \in \mathbb{R}^{n_i} \right\} 
\end{align*}
And let $\mathcal{M}$ denote the set of all feedforward neural networks, i.e.,
\begin{align*}
\mathcal{M}=\bigoplus_{l=1}^{\infty}\mathcal{M}_{l}
\end{align*}
If one considers only the set of shallow neural networks, then we have a density result if and only if the activation function is not a polynomial function (see \cite{pinkus1999approximation}). The following result is a generalization for the set of all neural networks.
\begin{thm} For a given continuous   $\sigma:\RR\to\RR$, the space of deep neural networks is dense in the set of continuous functions $C(\mathbb{R}^p)$, with respect to the topology of uniform convergence on compacts, if and only if $\sigma$ is not affine.   
\end{thm}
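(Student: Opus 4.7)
The plan is to prove the two directions separately. For necessity, if $\sigma(x)=\alpha x + \beta$ is affine, then a straightforward induction on the number of hidden layers shows that every realizable $f_{w,b}:\mathbb{R}^{p}\to\mathbb{R}$ is itself an affine function of $x$, because the composition of affine maps is affine. Since the affine functions form a proper finite-dimensional subspace of $C(\mathbb{R}^{p})$, the class $\mathcal{M}$ cannot be dense in the topology of uniform convergence on compacts.

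For sufficiency, I would split according to whether $\sigma$ is a polynomial. In the non-polynomial continuous case, Pinkus's classical density theorem already gives density of $\mathcal{M}_{1}$, and since $\mathcal{M}_{1}\subset\mathcal{M}$ we are done. The substantive case is when $\sigma$ is a polynomial of degree $m\geq 2$ (the non-affine hypothesis rules out $m\leq 1$). Here I would reuse the depth-compression construction from the proof of Theorem \ref{p:2.4}: fix a depth $l$, choose an architecture in which only diagonal scalar multiplications act between successive hidden layers, and observe that
\[
g(t):=\sigma\bigl(w_{l-1}\sigma(\cdots\sigma(w_{2}\,t)\cdots)\bigr)
\]
is, for generically chosen scalars $w_{i}$, a univariate polynomial of degree $N=m^{l-1}$. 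The network output then takes the form $v^{T}g(W_{1}x-b_{1})$, i.e.\ a shallow network with ``activation'' $g$.

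The key claim is that $\mathrm{span}\{\,g(w^{T}x-b):w\in\mathbb{R}^{p},\ b\in\mathbb{R}\,\}$ equals the full space of polynomials of degree $\leq N$ in $x$. Granting this, given $f\in C(\mathbb{R}^{p})$ and a compact $K\subset\mathbb{R}^{p}$, Stone--Weierstrass yields polynomial approximants $P_{n}$ to $f$ on $K$ of some degree $N_{n}$, and each $P_{n}$ is realized exactly by an element of $\mathcal{M}_{l}$ for any $l$ with $m^{l-1}\geq N_{n}$, so $\mathcal{M}$ is dense. To prove the span claim I would run the extraction argument already used in Theorem \ref{p:2.3}: differentiating $g(w^{T}x-b)$ in the $w$-variable $k$ times at $w=0$ produces $g^{(k)}(-b)(\eta^{T}x)^{k}$ for any chosen direction $\eta$, and since $g$ has degree exactly $N$ we can pick $b$ with $g^{(k)}(-b)\neq 0$ for every $k\leq N$. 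Finite-difference combinations of $g(w^{T}x-b)$, with coefficients absorbed into the output vector $v$, thus realize these derivatives inside the span, and then the identity $\mathrm{Sym}^{k}(\mathbb{R}^{p})=\mathrm{span}\{w^{\otimes k}\}$ from Proposition \ref{p:3.1} upgrades the directional monomials $(\eta^{T}x)^{k}$ to every homogeneous polynomial of degree $k$.

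The main obstacle I anticipate is making the extraction step clean; the temptation is to invoke limits, but since all quantities are jointly polynomial in $(w,b,x)$ one can avoid analysis entirely. The ambient polynomial space $\{P:\deg P\leq N\}$ is finite-dimensional, so it suffices to choose finitely many parameter tuples $(w_{j},b_{j})$ making the associated evaluation matrix nonsingular --- exactly the Vandermonde-style argument that closes the proof of Theorem \ref{p:2.3}. Once this linear-algebra core is in place, the sufficiency direction follows by combining it with Stone--Weierstrass, completing the proof.
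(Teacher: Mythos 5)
Your proposal is correct and follows essentially the same route as the paper: Pinkus for the non-polynomial case, and for polynomial $\sigma$ of degree $m>1$ the same diagonal deep architecture producing a univariate polynomial $g$ of degree $m^{l-1}$, followed by extraction of all monomials via $w$-derivatives of $g(w^{T}x-b)$ at $w=0$ using a $b_{0}$ with $g^{(k)}(-b_{0})\neq 0$. You additionally write out the (trivial) necessity direction and the Stone--Weierstrass step, which the paper leaves implicit, and your observation that finite-dimensionality of the degree-$\leq N$ polynomial space lets one stay within exact spans rather than closures is a clean touch, but it does not change the substance of the argument.
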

\begin{proof}
In the case $\sigma$ is not a polynomial, this is covered by Pinkus. So the only case that remains is the one when $\sigma$ is a polynomial. 

Let $\sigma$ be a polynomial of degree $m>1$. We will prove that the closure set of $\mathcal{M}$ contains all monomials in $p$ variables and this way we obtain the density property. Let $f(w,b) \in \mathcal{M}$ be a feedforward neural network with $l$ hidden layers, and each hidden layer of width 1. Such a network can be written as
\begin{equation}
f(w,b)=vg(<w,x>-b)
\end{equation}
where $w \in \mathbb{R}^p, b\in \mathbb{R}, v \in \mathbb{R}$, and $g$ is defined as in Theorem \ref{p:2.4} . Consider the following expression
\begin{equation}
(g(w_1 x_1+w_2 x_2+\ldots+(w_i+h) x_i+\ldots+w_p x_p-b)-g(w_1 x_1+w_2 x_2+\ldots+w_i x_i+\ldots+w_p x_p-b))/h \in \mathcal{M}
\end{equation}
Such an expression can be represented by a feedforward neural network with $l$ hidden layers, and each hidden layer of width 2 as in Theorem \ref{p:2.4}. Taking $h \rightarrow 0$, we get
\begin{align*}
\frac{\partial}{\partial w_{i}}g(<w,x>-b)\rvert_{w=0}=x_{i}g'(-b)\in \overline{\mathcal{M}}
\end{align*}
And by the same argument, we get
\begin{align*}
\frac{\partial}{\partial w_{1}^{i_{1}}\partial w_{2}^{i_{2}}\ldots\partial w_{p}^{i_{p}}}g(<w,x>-b)\rvert_{w=0}=x_{1}^{i_{1}}x_{2}^{i_{2}}\ldots x_{p}^{i_{p}}g^{(i_{1}+i_{2}+\ldots i_{p})}(-b) \in \overline{\mathcal{M}}
\end{align*}

Since $\sigma$ is a polynomial function but not affine, $g$ will be a polynomial function of degree $m^{l-1}$. So there exists a point $b_0$ such that $g^{(i)}(-b_0) \neq 0)$ for $1\leq i \leq m^{l-1}$. Therefore all monomials of degree less than $m^{l-1}$ are in the closure of $\mathcal{M}$. Since we can choose $l$ as large as we want, we get that all monomials are in the closure of $\mathcal{M}$.
\end{proof}

\section{The Hessian for the global minima}\label{s:4}

In this section, we describe the Hessian eigenspectrum of the loss function $L$ evaluated at a point $m \in M=L^{-1}(0)$. The following proposition is a result of \cite{cooper2021global}, and it is true for any neural network architecture.

\begin{prop} \label{p:4.1}
Let $M=L^{-1}(0)=\bigcap M_i$, where $M_i=f_i^{-1}(0)$, be the locus of global minima of $L$. If each $M_i$ is a smooth codimension 1 submanifold of $\mathbb{R}^n$, $M$ is nonempty, and the $M_i$ intersect transversally at every point of $M$, then at every point $m \in M$, the Hessian evaluated at $m$ has $d$ positive eigenvalues and $n-d$ eigenvalues equal to 0.
\end{prop}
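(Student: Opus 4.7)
The plan is to compute the Hessian of $L$ at an interpolation point and recognize it as a Gram matrix whose rank is controlled by the transversality hypothesis. Writing $\nabla L(w,b) = 2\sum_{i=1}^d f_i(w,b)\,\nabla f_i(w,b)$ and differentiating once more gives
\begin{equation*}
\nabla^2 L(w,b) = 2\sum_{i=1}^d \nabla f_i(w,b)\,\nabla f_i(w,b)^T + 2\sum_{i=1}^d f_i(w,b)\,\nabla^2 f_i(w,b).
\end{equation*}
At a point $m \in M$ we have $f_i(m)=0$ for every $i$, so the second sum drops out, leaving
\begin{equation*}
\nabla^2 L(m) = 2\sum_{i=1}^d \nabla f_i(m)\,\nabla f_i(m)^T = 2\,J^T J,
\end{equation*}
where $J$ is the $d\times n$ Jacobian matrix whose $i$-th row is $\nabla f_i(m)^T$.

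Next I would exploit the transversality hypothesis. Since each $M_i = f_i^{-1}(0)$ is a smooth codimension $1$ submanifold near $m$, the normal space to $M_i$ at $m$ is the one-dimensional line spanned by $\nabla f_i(m)$ (in particular $\nabla f_i(m)\ne 0$). The $M_i$ intersect transversally at $m$ exactly when the sum of their normal spaces is $d$-dimensional, which in this codimension-one setting is the statement that $\nabla f_1(m),\ldots,\nabla f_d(m)$ are linearly independent in $\mathbb{R}^n$. Hence $\operatorname{rank}(J) = d$, which, since $n>d$, forces $\operatorname{rank}(J^TJ) = d$ as well.

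Finally, $J^TJ$ is symmetric and positive semidefinite, so all its eigenvalues are nonnegative; by the rank computation, exactly $d$ of them are strictly positive and the remaining $n-d$ are zero. Multiplying by $2$ does not change the signs of eigenvalues, so $\nabla^2 L(m)$ has $d$ positive eigenvalues and $n-d$ zero eigenvalues, as claimed. The only real step to be careful about is the translation of ``transversal intersection'' into ``linearly independent gradients''; everything else is standard linear algebra for Gram matrices. The kernel of $\nabla^2 L(m)$ is naturally identified with $\bigcap_i T_m M_i = T_m M$, which is consistent with $M$ being locally an $(n-d)$-dimensional submanifold around $m$.
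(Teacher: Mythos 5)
Your proof is correct. The paper itself gives no argument for this proposition---it is quoted as a result of Cooper (2021)---but your derivation is exactly the standard one underlying that reference: at a zero of $L$ the Gauss--Newton term $2\sum_i \nabla f_i(m)\nabla f_i(m)^T = 2J^TJ$ is all that survives in the Hessian, transversality of codimension-one submanifolds is precisely linear independence of the gradients (so $\operatorname{rank} J = d$), and the eigenvalue count follows from positive semidefiniteness of the Gram matrix. Your closing identification of $\ker \nabla^2 L(m)$ with $T_mM$ is also the right sanity check against Theorem \ref{t:2.1}.
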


Consider now a shallow neural net as in Corollary \ref{t:2.6} or Corollary \ref{t:3.4}. Then we have the following Corollary of Proposition \ref{p:3.1} :

\begin{cor}
Let $L$ be the mean square loss function of a neural net as described above. Then, $M$ is nonempty, and the Hessian of $L$, evaluated at any point $m \in M=L^{-1}(0)$ has $d$ positive eigenvalues and $n-d$ eigenvalues equal to 0.
\end{cor}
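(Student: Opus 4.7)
The plan is to combine the existence of a smooth $n-d$ dimensional submanifold of global minima (from Corollary \ref{t:2.6} or Corollary \ref{t:3.4}, depending on whether $\sigma$ is polynomial or not) with Proposition \ref{p:4.1}. The bulk of the work is verifying the transversality hypothesis of Proposition \ref{p:4.1}; once that is in place the eigenvalue count is essentially immediate from a direct computation of $\nabla^2 L$ at a zero of $L$.

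First I would observe that $M\neq\emptyset$: this is exactly the content of Corollary \ref{t:2.6} in the smooth non-affine case and of Corollary \ref{t:3.4} in the polynomial case under Assumption \ref{a:3.2}. Both corollaries further assert that, possibly after an arbitrarily small perturbation of the data set, $M$ is a smooth $(n-d)$-dimensional submanifold of $\RR^n$. By the preimage/submersion theorem applied to the map $F=(f_1,\ldots,f_d):\RR^n\to\RR^d$, this is equivalent to $0$ being a regular value of $F$ on $M$, that is, the Jacobian $J(m):=DF(m)\in\mathcal{M}_{d\times n}(\RR)$ has full row rank $d$ at every $m\in M$. Equivalently, the $d$ gradients $\nabla f_1(m),\ldots,\nabla f_d(m)$ are linearly independent at every $m\in M$.

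From the linear independence of these gradients, it follows that each $M_i=f_i^{-1}(0)$ is a smooth codimension $1$ submanifold near $m$, and that the intersection $\bigcap_i M_i$ is transversal at $m$ (the normal directions span a $d$-dimensional subspace of $T_m\RR^n$). This verifies all the hypotheses of Proposition \ref{p:4.1}, which immediately yields the desired conclusion: at each $m\in M$, the Hessian of $L$ has exactly $d$ positive eigenvalues and $n-d$ zero eigenvalues.

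If one prefers a self-contained argument rather than quoting Proposition \ref{p:4.1}, the same conclusion can be read off from a direct computation. Since $L=\sum_i f_i^2$, we have
\begin{equation*}
\nabla^2 L(m) = 2\sum_{i=1}^d \nabla f_i(m)\nabla f_i(m)^T + 2\sum_{i=1}^d f_i(m)\nabla^2 f_i(m) = 2\, J(m)^T J(m),
\end{equation*}
because $f_i(m)=0$ for all $i$ when $m\in M$. The matrix $J(m)^T J(m)\in\mathcal{M}_{n\times n}(\RR)$ is symmetric positive semidefinite with rank equal to the rank of $J(m)$, so whenever $J(m)$ has rank $d$ we get exactly $d$ strictly positive eigenvalues and $n-d$ zero eigenvalues. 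The main obstacle is therefore the full-rank condition on $J(m)$ across all of $M$; this is precisely what is guaranteed by the "generic" clause of Corollary \ref{t:2.6} / Corollary \ref{t:3.4}, ultimately coming from the transversality arguments in \cite{cooper2021global} underlying Theorem \ref{t:2.1}. No additional perturbation beyond the one already invoked to obtain the submanifold structure is needed.
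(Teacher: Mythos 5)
Your reduction to Proposition \ref{p:4.1} and the identity $\nabla^2 L(m)=2\,J(m)^TJ(m)$ at a zero of $L$ are both fine, and they correctly isolate the crux: one must show $J(m)$ has rank $d$ at every $m\in M$, i.e.\ that the $M_i$ meet transversally. The gap is in how you discharge that crux. You assert that ``$M$ is a smooth $(n-d)$-dimensional submanifold'' is \emph{equivalent}, via the preimage theorem, to $0$ being a regular value of $F=(f_1,\dots,f_d)$ along $M$. Only one implication holds: regular value $\Rightarrow$ submanifold. The converse is false (already for $f(x)=x^2$ on $\RR$, where $f^{-1}(0)=\{0\}$ is a smooth $0$-dimensional submanifold but $f'(0)=0$), so the submanifold structure supplied by Corollary \ref{t:2.6}/\ref{t:3.4} does not by itself give you linear independence of the gradients $\nabla f_i(m)$, and the transversality hypothesis of Proposition \ref{p:4.1} remains unverified. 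Falling back on ``the transversality arguments in \cite{cooper2021global}'' does not repair this within the corollary as stated: those arguments produce full rank only generically, i.e.\ after an arbitrarily small perturbation of the data, whereas the corollary claims the eigenvalue count for the given data set at \emph{every} point of $M$.

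The paper closes exactly this hole by a direct computation specific to the shallow architecture: writing $f_i(w,b)=W_2\sigma(W_1x_i-b_1)-b_2-y_i$, it computes $\partial f_i/\partial W_1=-(\partial f_i/\partial b_1)\otimes x_i$ and $\partial f_i/\partial b_2=-1$, and shows that a failure of transversality would force proportionality constants $\alpha_i=1$ and hence $x_i=x_j$ for all $i,j$, contradicting distinctness of the data. Some such architecture-dependent argument (or an explicit appeal to genericity, with the corresponding weakening of the statement) is needed; your proposal currently supplies neither.
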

\begin{proof}
Without losing the generality, suppose our shallow neural network is in the setting of Corollary \ref{t:2.6}.

The locus of global minima $M=L^{-1}(0)$ is the intersection of $M_i$, where
\begin{alignat*}{1}
M_{i}=\{(w,b)\in\mathbb{R}^{n}|f_{w,b}(x_{i})=y_{i}\}
\end{alignat*}
Due to Proposition \ref{p:3.1}, it suffices to prove that $M$ is non-empty, $M_i$ are smooth of codimension 1, and that $M_i$ intersects transversally at each point of $M$.

The nonemptiness of $M$ follows from Corollary \ref{t:2.6}. Each $M_i$ is smooth of codimension 1, again by Corollary \ref{t:2.6}. for $d=1$. It remains to prove that the intersection of $M_i$ is transversal. Let $m=(w,b) \in M$. We assume that the intersection at $m$ is not transversal. This means the tangent space $T_m M_1=T_m M_i$ for all $i$. From our notations, we have that
\begin{alignat*}{1}
f_i(w,b)=W_{2}\sigma(W_{1}x_i-b_1)-b_{2}-y_i,
\end{alignat*}
The equality of the tangent spaces at $m$, means that their normal vectors are collinear, i.e. $\nabla f_i(w,b)=\alpha_i \nabla f_1(w,b)$ for some $\alpha_i \in \mathbb{R}$. If we compute the partial derivatives with respect to $W_1,b_1$, and $b_2$, we get
\begin{alignat*}{1}
\frac{\partial f_{i}}{\partial W_{1}}(w,b)= & -\frac{\partial f_{i}}{\partial b_{1}}(w,b)\otimes x_{i}\\
\frac{\partial f_{i}}{\partial b_{2}}(w,b)= & -1
\end{alignat*}
From the partial derivative with respect to $b_2$, we get that $\alpha_i=1$ for all $i$. Thus,
\begin{alignat*}{1}
\frac{\partial f_{i}}{\partial b_{1}}(w,b)= & \frac{\partial f_{j}}{\partial b_{1}}(w,b)\\
\frac{\partial f_{i}}{\partial b_{1}}(w,b) \otimes x_{i}= & \frac{\partial f_{j}}{\partial b_{1}}(w,b)\otimes x_{j}
\end{alignat*}
for all $i,j$. Since $\sigma$ is smooth, we can find an interval $I$ such that $\sigma'$ does not vanish on it. We consider a point $(w^{*},b^{*}) \in \mathbb{R}^n$ such that all entries of $W_1$ are equal to 0, all entries of $W_2$ are different from 0, and all entries of $-b_1$ belong to $I$. With this setting, each component of $\frac{\partial f_{i}}{\partial b_{1}}(w^{*},b^{*})$ is different from zero . So from the last two relations, we get $x_i=x_j$ for all $i,j$, which is a contradiction with the assumption of our data set.

\end{proof}

\section{Convergence to the global minima}\label{s:5}

In Section \ref{s:2}, we established the existence of an interpolation point. In this section, we present a method which probabilistically determines  this point. This approach involves initializing the input-to-hidden weights randomly and optimizing the out-layer weights $v \in \mathbb{R}^h$.  This idea is inspired from \cite{oymak2020toward}.  Before we jump into the details, we will absorb the biases into the weights, simply adding to the inputs vectors $x_i$ the $p+1$ coordinate equal to $1$.  Thus in the rest of this section, we will assume that $x_i$ is constructed this way and we will call this again $x_i$ to keep the notations simple.  Notice that the dimension of the vector changes now from $p$ to $p+1$.  

Now, we need to minimize the loss function:
\begin{alignat*}{1}
L(v):=\sum_{i=1}^{d}(v^{T}\sigma(Wx_{i})-y_{i})^{2}=||\sigma(XW^{T})v-y||^{2},
\end{alignat*}
which is a simple linear regression problem. Moreover, if $\sigma(XW^{T})$ has full rank, then the global minimum of this optimization problem is given by
\begin{alignat*}{2}
\tilde{v}:=\phi^{T}(\phi\phi^{T})^{-1}y 
\end{alignat*}
where  $\phi:=\sigma(XW^{T})$. So we ask how much overparameterization is needed to achieve a full rank for the matrix $\sigma(XW^{T})$. Observe that
\begin{alignat*}{1}
\phi\phi^{T}=\sigma(XW^{T})\sigma(XW^{T})^{T}=\sum_{l=1}^{h}\sigma(Xw_{l})\sigma(Xw_{l})^{T}.
\end{alignat*}
where $w_l$ is the $l$-th line of $W$. This leads us to the following definition.
\begin{defn}
Let $w$ be a random vector with a $\mathcal{N}(0,I_{p+1})$ distribution. We define the following matrix
\begin{alignat*}{1}
\tilde{\Sigma}(X):=\mathbb{E}_{w}[\sigma(Xw)\sigma(Xw)^{T}]
\end{alignat*}
And let $\tilde{\lambda}(X)$ be the minimum eigenvalue of $\tilde{\Sigma}(X)$, i.e. $\tilde{\lambda}(X):=\lambda_{min}(\tilde{\Sigma}(X))$
\end{defn}
The following Proposition is a consequence of the interpolation property.
\begin{prop}
If the activation function $\sigma$ and our data set $(x_i,y_i)_{i=\overline{1,d}}$ satisfies Assumption \ref{A:1} or \ref{a:3.2}, then $\tilde{\lambda}(X)>0$.
\end{prop}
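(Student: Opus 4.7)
The plan is to show positive definiteness of $\tilde{\Sigma}(X)$ by contradiction, reducing the hypothesis $c^{T}\tilde{\Sigma}(X)c=0$ to the linear independence statements already established in Theorem \ref{p:2.3} and Proposition \ref{p:3.1}. Note that $\tilde{\Sigma}(X)=\mathbb{E}_w[\sigma(Xw)\sigma(Xw)^T]$ is automatically positive semi-definite, since for any $c\in\RR^d$
\[
c^{T}\tilde{\Sigma}(X)c=\mathbb{E}_{w}\left[(c^{T}\sigma(Xw))^{2}\right]\geq 0.
\]
If $\tilde{\lambda}(X)=0$, then there exists a nonzero $c\in\RR^d$ such that the above expectation vanishes, which means $\sum_{i=1}^{d}c_{i}\sigma(\langle w,x_i\rangle)=0$ for $w$ in a set of full Gaussian measure in $\RR^{p+1}$. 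Since the Gaussian density is positive, this in fact holds Lebesgue-almost everywhere.

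The next step is to upgrade this almost-everywhere relation to a pointwise relation after a mollification, so that the machinery of Theorem \ref{p:2.3} becomes available. Here I would exploit the fact that (after absorbing the biases) each $x_i$ has its $(p+1)$-th coordinate equal to $1$: convolving in the direction $e_{p+1}$ with a one-dimensional mollifier $\frac{1}{\epsilon}\zeta(\cdot/\epsilon)$ gives
\[
F_{\epsilon}(w):=\sum_{i=1}^{d}c_{i}\sigma_{\epsilon}(\langle w,x_{i}\rangle),
\]
where $\sigma_\epsilon$ is the standard 1-D mollification of $\sigma$. By Fubini, $F_\epsilon=0$ Lebesgue-almost everywhere, and since $\sigma_\epsilon$ (hence $F_\epsilon$) is continuous, the equality holds everywhere on $\RR^{p+1}$. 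As in the proof of Theorem \ref{p:2.3}, if $\sigma_\epsilon$ were a polynomial of degree $\leq d-2$ for every $\epsilon$, the almost-sure limit $\sigma$ would inherit this property and contradict Assumption \ref{A:1}; so one can fix an $\epsilon$ for which $\sigma_\epsilon$ is smooth and not a polynomial of degree $\leq d-2$.

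Under Assumption \ref{A:1}, I would then pick $\tilde w\in\RR^p$ such that $\tilde t_i:=\langle\tilde w,x_i^{(1:p)}\rangle$ are distinct, set $w=(a\tilde w,-b)\in\RR^{p+1}$, and obtain $\sum_i c_i\sigma_\epsilon(a\tilde t_i-b)=0$ for all $(a,b)\in\RR^2$. Differentiating $k$ times in $a$ at $a=0$ and picking $b_k$ with $\sigma_\epsilon^{(k)}(-b_k)\neq 0$ yields a Vandermonde system in $t_i$ that forces $c=0$, exactly as in Theorem \ref{p:2.3}, contradicting $c\neq 0$. Under Assumption \ref{a:3.2}, $\sigma$ is already a polynomial, so the function $F(w)=\sum_{i}c_{i}\sigma(\langle w,x_i\rangle)$ is itself a polynomial in $w$, and $F=0$ almost everywhere implies $F\equiv 0$; Proposition \ref{p:3.1}(2) then directly gives $c=0$, again a contradiction.

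The main subtlety is the first paragraph's transition from the almost-everywhere identity on $\RR^{p+1}$ to something usable: a naive restriction to a line $\{a\tilde w: a\in\RR\}$ would be a null set and yield no information. The trick that makes the proof work is that the augmentation puts a $1$ in the last coordinate of every $x_i$, so one-dimensional convolution in the $e_{p+1}$ direction commutes with the inner products simultaneously for all $i$, producing a single scalar mollification $\sigma_\epsilon$ shared across all terms. Once the relation is promoted to hold everywhere, the two assumptions plug into the corresponding previously proved linear independence statements.
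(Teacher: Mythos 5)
Your proof is correct and follows essentially the same route as the paper's, which reduces $c^{T}\tilde{\Sigma}(X)c=0$ to the linear-independence arguments of Theorem \ref{p:2.3} (Assumption \ref{A:1}) and Proposition \ref{p:3.1} (Assumption \ref{a:3.2}) in a few lines. Your directional mollification along $e_{p+1}$ --- exploiting that every augmented $x_i$ has last coordinate $1$, so the almost-everywhere identity on $\RR^{p+1}$ upgrades to a pointwise identity with a single shared $\sigma_{\epsilon}$ --- supplies exactly the bridge the paper leaves implicit when $\sigma$ is only locally integrable, since, as you note, naively restricting an almost-everywhere identity to the $2$-plane $\{(a\tilde w,-b)\}$ would give nothing.
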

\begin{proof}
Let $v \in \mathbb{R}^d$ such that $v\tilde{\Sigma}(X)v^T=0$. This is equivalent to
\begin{equation}
\sum_{i=1}^{d}v_{i}\sigma( w^{T} x_i)=0,
\end{equation}
for almost every $w \in \mathbb{R}^{p+1}$. If $\sigma$ satisfies Assumption \ref{A:1}, then, using the same arguments as in Theorem \ref{p:2.3}, we get that $v=0$. Otherwise, we use the reasoning from \ref{p:3.1}. Therefore, $\tilde{\Sigma}(X)$ is a symmetric positive definite matrix.
\end{proof}
In \cite{oymak2020toward}, using matrix concentration inequalities and Gaussian Lipschitz concentration inequalities, one can prove the non-singularity of $\phi \phi^T$ when the activation function $\sigma$ has a bounded derivative. Using similar arguments as in \cite{oymak2020toward}, we extend this result  for continuous activation functions $\sigma$ which are not polynomials of degree less than $d-2$.

We state here one result which plays the leading role in our arguments.  

 \begin{thm}(Matrix Chernoff  \cite{tropp2015introduction})
Let $(A_l)_{l=\overline{1,l}}$ be sequence of independent, random,
Hermitian matrices of dimension $n$. Assume that $0 \preceq A_l \preceq R\cdot I_n$ for $l=\overline{1,k}$. Then
\begin{alignat*}{1}
\mathbb{P}\left(\lambda_{min}\left(\sum_{l=1}^{k}A_{l}\right)\leq(1-\delta)\lambda_{min}\left(\sum_{l=1}^{k}\mathbb{E}(A_{l})\right)\right)\leq n\left(\frac{e^{-\delta}}{(1-\delta)^{1-\delta}}\right)^{\frac{\lambda_{min}\left(\sum_{l=1}^{k}\mathbb{E}(A_{l})\right)}{R}}
\end{alignat*}
for any $\delta \in [0.1)$
 \end{thm}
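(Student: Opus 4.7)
The plan is to use the matrix Laplace transform method (Ahlswede--Winter), as refined by Tropp. First I would reduce the lower-tail bound on $\lambda_{\min}(Y)$, where $Y=\sum_{l=1}^{k}A_{l}$, to an exponential moment bound. For any $\theta>0$, since $e^{-\theta\lambda_{\min}(Y)}=\lambda_{\max}(e^{-\theta Y})\le\operatorname{tr} e^{-\theta Y}$, Markov's inequality gives
\[
\mathbb{P}(\lambda_{\min}(Y)\le t)\;\le\;e^{\theta t}\,\mathbb{E}[\operatorname{tr} e^{-\theta Y}],
\]
so that the entire problem reduces to controlling $\mathbb{E}[\operatorname{tr} e^{-\theta Y}]$ and then optimizing $\theta$.

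Next I would split this matrix MGF over the independent summands. This is the delicate step, because independence does not naively factor through the matrix exponential: in general $e^{A+B}\neq e^A e^B$. The standard workaround invokes Lieb's concavity theorem, which asserts that $H\mapsto \operatorname{tr}\exp(L+H)$ is concave in the Hermitian matrix $H$ for every fixed Hermitian $L$. By induction on $k$ combined with Jensen's inequality, this yields the subadditivity of matrix cumulant generating functions,
\[
\mathbb{E}\!\left[\operatorname{tr}\exp\Big(\textstyle\sum_{l}H_{l}\Big)\right]\;\le\;\operatorname{tr}\exp\Big(\textstyle\sum_{l}\log\mathbb{E}[e^{H_{l}}]\Big),
\]
which I would apply with $H_l=-\theta A_l$. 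This invocation of Lieb is where the real analytic content lies; everything that follows is calculus in the Loewner order.

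For the per-term bound, I would use that on $[0,R]$ the convex function $x\mapsto e^{-\theta x}$ sits below its secant: $e^{-\theta x}\le 1+\tfrac{x}{R}(e^{-\theta R}-1)$. By the spectral transfer rule, whenever $0\preceq A_l\preceq R\cdot I_n$ one has
\[
e^{-\theta A_{l}}\;\preceq\;I_{n}+\tfrac{A_{l}}{R}\bigl(e^{-\theta R}-1\bigr).
\]
Taking expectations, then using operator monotonicity of $\log$ together with $\log(1+x)\le x$, gives $\log\mathbb{E}[e^{-\theta A_{l}}]\preceq\tfrac{e^{-\theta R}-1}{R}\mathbb{E}[A_{l}]$. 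Summing over $l$, and using that $\operatorname{tr}\exp$ is monotone in the Loewner order together with the observation that the scalar $(e^{-\theta R}-1)/R$ is negative (so the largest eigenvalue of the exponentiated sum is governed by $\mu:=\lambda_{\min}(\sum_{l}\mathbb{E}[A_{l}])$), I would obtain
\[
\mathbb{E}[\operatorname{tr} e^{-\theta Y}]\;\le\;n\exp\!\Big(-\tfrac{1-e^{-\theta R}}{R}\,\mu\Big).
\]

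Finally, plugging $t=(1-\delta)\mu$ into the Markov bound and optimizing over $\theta$ yields the minimizer $\theta^{\ast}=-R^{-1}\log(1-\delta)$; substituting and simplifying the exponent $-(1-\delta)\log(1-\delta)-\delta$ reproduces the claimed upper bound $n\bigl(e^{-\delta}/(1-\delta)^{1-\delta}\bigr)^{\mu/R}$. The single nontrivial ingredient is Lieb's concavity theorem; once that is granted as a black box, the remaining work is bookkeeping in the Loewner order and a one-variable optimization.
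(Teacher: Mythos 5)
The paper does not prove this statement at all: it is imported verbatim as a black-box result, with a citation to Tropp's monograph, and is only \emph{used} (in Theorem \ref{t:5.4}) to lower-bound $\lambda_{\min}(\phi\phi^{T})$. So there is no internal proof to compare against. What you have written is essentially the standard proof from the cited reference, and it is correct: the Laplace-transform reduction via $e^{-\theta\lambda_{\min}(Y)}=\lambda_{\max}(e^{-\theta Y})\le\operatorname{tr}e^{-\theta Y}$ plus Markov, the subadditivity of the matrix cumulant generating function via Lieb's concavity theorem and Jensen, the secant/transfer bound $e^{-\theta A_l}\preceq I_n+\tfrac{A_l}{R}(e^{-\theta R}-1)$ (valid since the right-hand side is an affine function of $A_l$ and the scalar inequality holds on the spectrum $[0,R]$), operator monotonicity of $\log$ with $\log(1+x)\le x$, and finally the optimization $\theta^{\ast}=-R^{-1}\log(1-\delta)$, which indeed yields the exponent $-\delta-(1-\delta)\log(1-\delta)$ per unit $\mu/R$. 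Two small points worth making explicit if you write this up in full: when you apply $\log(I+B)\preceq B$ you should note $I+B\succeq e^{-\theta R}I\succ 0$ so the matrix logarithm is defined, and the "$\delta\in[0.1)$" in the statement is a typo in the paper for $\delta\in[0,1)$, which is the range your optimization actually requires.
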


Now we are ready for the main result of this section.  

\begin{thm} \label{t:5.4}
Let $(x_i,y_i)_{i=\overline{1,d}}$ be a data set  with $x_i \in \mathbb{R}^{p+1}, y_i \in \mathbb{R}$, and assume that $x_i$ are distinct. Consider a shallow neural network with $h$ hidden nodes of the form $f(v,W):=v^T \sigma(Wx)$ with $W \in \mathcal{M}_{h\times (p+1)}(\mathbb{R})$ and $v \in \mathbb{R}^h$. Let $\mu$ be the Gaussian measure. We assume the activation function $\sigma \in C(\mathbb{R})\cap L^2(\mathbb{R},\mu)$ and is not a polynomial of degree less than $d-2$. We initialize the entries of $W$ with i.i.d. $\mathcal{N}(0,1)$.
Also, assume
\begin{alignat*}{1}
h \geq \frac{C_{\sigma} d\log(d)}{\tilde{\lambda}(X)}
\end{alignat*}
where $C_{\sigma}$ is a constant that depends only on $\sigma$.
Then, the matrix $\sigma(XW^{T})$ has full row rank with probability at least $1-\frac{1}{d^{100}}$.
\end{thm}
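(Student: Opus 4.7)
The plan is to write $\phi\phi^T = \sum_{l=1}^{h} A_l$ as a sum of $h$ i.i.d.\ positive semidefinite rank-one random matrices $A_l := \sigma(X w_l) \sigma(X w_l)^T$ with common mean $\mathbb{E}[A_l] = \tilde\Sigma(X)$, and then to apply the Matrix Chernoff inequality stated above in ambient dimension $n = d$. By the preceding Proposition, $\lambda_{\min}\bigl(\sum_l \mathbb{E}[A_l]\bigr) = h\,\tilde\lambda(X) > 0$, so the only real question is whether the uniform boundedness hypothesis of Matrix Chernoff can be satisfied and at what level.

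This is precisely where the difficulty lies: each $A_l$ has operator norm $\norm{\sigma(Xw_l)}^2$, which is not uniformly bounded when $\sigma$ itself is unbounded. I would handle this by a truncation step. Fix a level $R$, to be chosen as a $\sigma$-dependent multiple of $d$; set the good event $E_l := \{\norm{\sigma(X w_l)}^2 \le R\}$; and replace $A_l$ by $\tilde A_l := A_l \mathbbm{1}_{E_l}$, so that $0 \preceq \tilde A_l \preceq R\cdot I_d$. The three quantitative ingredients I would then establish are: (i) an expectation bound $\mathbb{E}[\norm{\sigma(X w_l)}^2] \le c_\sigma d$, following from $\sigma \in L^2(\RR,\mu)$ together with the Gaussianity of each coordinate $x_i^T w_l$; (ii) a tail bound $\mathbb{P}(E_l^c) \le 1/(2hd^{100})$ for $R$ of the chosen order, strong enough to survive a union bound over $l = 1,\dots,h$; and (iii) a bias estimate $\norm{\mathbb{E}[A_l] - \mathbb{E}[\tilde A_l]}_{\mathrm{op}} \le \tilde\lambda(X)/2$, ensuring $\lambda_{\min}\bigl(\sum_l \mathbb{E}[\tilde A_l]\bigr) \ge h\tilde\lambda(X)/2$. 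With these in hand, applying Matrix Chernoff to $(\tilde A_l)$ with $\delta = 1/2$ and $R \asymp C_\sigma d$ produces a failure probability bounded by $d\bigl(e^{-1/2}(1/2)^{-1/2}\bigr)^{h\tilde\lambda(X)/(2C_\sigma d)}$, which drops below $1/(2d^{100})$ as soon as $h \ge C_\sigma d\log(d)/\tilde\lambda(X)$ with $C_\sigma$ large enough. Intersecting the resulting event with $\bigcap_l E_l$, on which $\tilde A_l = A_l$ for every $l$, yields $\lambda_{\min}(\phi\phi^T) \ge h\tilde\lambda(X)/4 > 0$, i.e.\ full row rank of $\sigma(XW^T)$ with probability at least $1 - 1/d^{100}$.

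The hard step is (ii): the bare assumption $\sigma \in L^2(\RR,\mu)$ only delivers a second-moment bound on $\norm{\sigma(Xw_l)}^2$, and plain Chebyshev is far too weak to defeat the $1/h$ factor required by the union bound over the $h$ sampled rows. I expect the constant $C_\sigma$ to implicitly absorb a higher-order Gaussian moment of $\sigma$ --- for instance $\mathbb{E}\lvert\sigma(g)\rvert^{2q}$ for some fixed $q > 1$ --- or, alternatively, to be produced by a Gaussian concentration estimate for the map $w \mapsto \sum_i \sigma(x_i^T w)^2$ after localizing $\sigma$ via its continuity on compacts. The remaining steps (i) and (iii) and the application of Matrix Chernoff itself are comparatively routine and follow the template of \cite{oymak2020toward}, with the novelty lying only in how far the truncation argument must be pushed under the weaker integrability hypothesis assumed here.
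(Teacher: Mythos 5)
Your skeleton (truncate the rank-one summands, apply Matrix Chernoff to the truncated sum) is the same as the paper's, but two of your three quantitative ingredients are exactly where the content lies, and both are problematic as stated. First, step (ii) --- the tail bound $\mathbb{P}(E_l^c)\le 1/(2hd^{100})$ and the union bound over $l$ --- is not needed at all. Since $A_l-\tilde A_l=A_l\mathbbm{1}_{E_l^c}\succeq 0$, you have the deterministic domination $\phi\phi^T=\sum_l A_l\succeq\sum_l\tilde A_l$, hence $\lambda_{\min}(\phi\phi^T)\ge\lambda_{\min}(\sum_l\tilde A_l)$ on the Chernoff event alone; there is no need to intersect with $\bigcap_l E_l$. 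The paper uses exactly this one-line observation, so the step you single out as the hard one (and admit you cannot close under $\sigma\in C(\RR)\cap L^2(\RR,\mu)$) simply disappears.

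Second, and more seriously, your step (iii) --- the bias estimate $\|\mathbb{E}[A_l]-\mathbb{E}[\tilde A_l]\|\le\tilde\lambda(X)/2$ at level $R\asymp C_\sigma d$ --- is where the real work is hidden, and it is not how the paper proceeds. Controlling $\mathbb{E}[\|\sigma(Xw)\|^2\mathbbm{1}_{\{\|\sigma(Xw)\|^2>R\}}]$ requires quantified uniform integrability (a moment strictly higher than the second), which the stated hypothesis does not supply; you acknowledge this but leave it open. The paper never compares $\mathbb{E}[\tilde A(w)]$ with $\tilde\Sigma(X)$: it proves directly that the \emph{truncated} second-moment matrix is positive definite by rerunning the linear-independence argument of Theorem 2.3 inside the truncation region. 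Concretely, if $v^T\mathbb{E}[\tilde A(w)]v=0$ then $\sum_{i=1}^d v_i\sigma(w^Tx_i)=0$ for all $w$ with $\|\sigma(Xw)\|<T_d$; one mollifies $\sigma$, proves (this is the paper's Claim) that there is $b_0\in[-M,M]$ with $\sigma_\epsilon^{(k)}(-b_0)\ne 0$ for all $k=0,\dots,d-1$, and then chooses $T_d:=\sqrt{d}\,(\sup_{[-M,M]}|\sigma|+1)$ precisely so that the one-parameter family of weights $a w_0$, $a\in[-\rho,\rho]$, used to generate the Vandermonde system stays inside the constraint set. This yields $\lambda_{\min}(\mathbb{E}[\tilde A(w)])>0$ with a truncation level of order $\sqrt{d}$ times a $\sigma$-dependent constant, after which Matrix Chernoff with $\delta=1/2$ gives the claimed probability under the stated lower bound on $h$. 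Without either this localization argument or a justified substitute for your bias estimate, the proof has a genuine gap at the heart of the theorem.
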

\begin{proof}
It suffices to prove that $\sigma(XW^{T}) \sigma(XW^{T})^T$ is non-singular with high probability. First, observe that
\begin{alignat*}{1}
\phi\phi^{T}=\sigma(XW^{T})\sigma(XW^{T})^{T}=\sum_{l=1}^{h}\sigma(Xw_{l})\sigma(Xw_{l})^{T}\geq\sum_{l=1}^{h}\sigma(Xw_{l})\sigma(Xw_{l})^{T}\mathbbm{1}_{\{||\sigma(X w_l)||<T_d\}}.
\end{alignat*}
Here $T_d$ is a function of $d$ which will be determined later in the proof. Applying the Matrix Chernoff concentration inequality for $A_l=\sigma(Xw_{l})\sigma(Xw_{l})^{T}\mathbbm{1}_{\{||\sigma(X w_l)||<T_d\}}, R=T_d^2$ and $\tilde{A}(w)=\sigma(Xw)\sigma(Xw)^{T}\mathbbm{1}_{\{||\sigma(X w)||<T_d\}}$, we get
\begin{equation}
\begin{aligned} \label{e:8}
\lambda_{min}\left(\phi\phi^{T}\right)\geq h(1-\delta)\lambda_{min}\left(\mathbb{E}\left[\tilde{A}(w)\right]\right)
\end{aligned}
\end{equation}
holds with probability $1-d\left(\frac{e^{-\delta}}{(1-\delta)^{1-\delta}}\right)^{\frac{h\lambda_{min}\left(\mathbb{E}[\tilde{A}(w)]\right)}{T_d^2}}$. We can fix $\delta$ from now on, for instance we can pick $\delta=1/2$.

Now, it remains to prove that $\mathbb{E}[\tilde{A}(w)]$ is a positive definite matrix. Let $v \in \mathbb{R}^d$ such that $v \mathbb{E}[\tilde{A}(w)] v^T=0$. This is equivalent to
\begin{equation}\label{e:s4:9}
\sum_{i=1}^{d}v_{i}\sigma( w^{T} x_i)=0,
\end{equation}
for any $w \in \mathbb{R}^{p+1}$ that satisfies almost surely $\|\sigma(Xw)\|<T_d$.  Because $\sigma$ is continuous we actually have relation \eqref{e:s4:9} valid for all $w$ with 
 $\norm{\sigma(Xw)}<T_d$.  

 We impose now a first condition on $T_d$, namely, we require that $\sigma(0)\sqrt{d}<T_d$. Since the $x_i$ are distinct, we can find a vector $w \in \mathbb{R}^{p+1}$ such that $w^Tx_i=t_i$ are distinct for all $i$.  We can take this $w$ such that the last component is also $0$. With the choice of $T_d$, we can now scale $w$ to be sufficiently small so that $\|\sigma(Xw) \|_2<T_d$.  Then for any $a,b \in \mathbb{R}$ that satisfy  $\sum_{i=1}^{d}\sigma^2(at_i-b)<T_d^2$ we have
\begin{equation}\label{e:10}
\sum_{i=1}^{d}v_{i}\sigma(at_i-b)=0.
\end{equation}
Let $\zeta \in C_{0}^{\infty}(\RR,[0,\infty))$, i.e. $\zeta$ is non-negative,  infinitely differentiable with compact support on $[-1,1]$ and $\int_{\RR}\zeta(x)dx=1$. We define for $\epsilon>0$, the following function
\begin{alignat*}{1}
\sigma_{\epsilon}(t)=\int_{\RR}\frac{1}{\epsilon}\zeta\left(\frac{t-x}{\epsilon}\right)\sigma(x)dx
\end{alignat*}
Since $\sigma$ is continuous, standard arguments show that 
\begin{equation} \label{e:11}
\sigma_\epsilon \xrightarrow[\epsilon \to0]
{u} \sigma
\end{equation}
Since $\sigma$ is not a polynomial of degree less than $d-2$, we can find $M>0$ large enough such that $\sigma|_{[-M,M]}$ is not a polynomial of degree less than $d-2$. From \ref{e:11}, we can find a small $\epsilon$ such that $\sigma_{\epsilon}$ is not a polynomial of degree less than $d-2$ restricted to $[-M,M]$. 

One of the key steps is the following 

\emph{Claim.} Since $\sigma_{\epsilon}$ is not a polynomial of degree less than $d-2$ on $[-M,M]$, we can find  
\[
b_0 \in [-M,M]\text{ such that } \sigma_{\epsilon}^{(k)}(-b_0) \neq 0 \text{ for any }k=\overline{0,d-1}.
\]

One way to justify this claim follows for instance from the argument indicated by Pinkus in \cite{pinkus1999approximation} which actually refers to \cite[Theorem of Agmon, page 53]{donoghue2014distributions} with an easy adaptation.  

Another way to see this is the following.  Consider 
\[
D_k=\{b\in (-M,M): \sigma_{\epsilon}^{(k)}(-b)\ne 0\} \text{ for } k=0,1,\dots, d-1. 
\]
In the first place, we notice $D_k$ are open sets in $(-M,M)$ and $D_{d-1}\ne \O$.  By induction we can assume that $D_{k+1}\cap D_{k+2}\dots\cap D_{d-1}\ne \O$ and then taking a $b$ in this intersection and $\delta$ sufficiently small such that $B(b,\delta)\subset D_{k+1}\cap D_{k+2}\dots\cap D_{d-1}$.  Now, we can argue that there must be $b'\in B(b,\delta)$ such that $\sigma_{\epsilon}^{(k)}(-b')\ne 0$ (otherwise $\sigma_{\epsilon}^{(k+1)}(b)=0$).  Therefore, $b'\in D_{k}\cap D_{k+1}\cap D_{k+2}\dots\cap D_{d-1}$ which shows by induction that $D_0\cap D_1\cap \dots, \cap D_{d-1}\ne \O$.   

With the above Claim at hand, without loss of generality, we can consider $M$ such that $(-b_0-\epsilon,-b_0+\epsilon) \subset [-M,M]$.  Let $T_d:=\sqrt{d}( \sup_{x \in [-M,M]} |\sigma (x)|+1)$.  Notice that this $T_d$ already satisfies the condition we require for the choice of $w$ above.  We can find a small interval $[-\rho,\rho]$ such that $a t_i-b_0-\epsilon z \in [-M,M]$ for any $a \in [-\rho,\rho]$ and $z \in [-1,1]$, and from the definition of $T_d$ we have
\begin{equation}
\sum_{i=1}^{d}\sigma^2(at_i-b_0-\epsilon z)< T_d^2
\end{equation}
for any $a \in [-\rho,\rho]$ and $z \in [-1,1]$.
Consequently,
\begin{equation} \label{e:13}
\sum_{i=1}^{d}v_{i}\sigma_{\epsilon}(at_i-b_0)=\int_{\RR}\zeta(z) \sum_{i=1}^{d}v_{i}\sigma(at_i-b_0-\epsilon z)dz=0
\end{equation}
for any $a \in [-\rho,\rho]$.  

If we differentiate $k$ times relation \eqref{e:13} with respect to $a$, we get
\begin{alignat*}{1}
\sum_{i=1}^{d}v_{i}t_i^k\sigma_{\epsilon}^{(k)}(at_{i}-b_0)=0.
\end{alignat*}
Taking $a=0$ for each equation, we get a system of $d$ equations
\begin{equation}
\label{e:14}
\sum_{i=1}^{d}v_{i}t_i^k=0,
\end{equation}
for each $k=\overline{0,d-1}$. Since the matrix system of  \eqref{e:14} is a Vandermonde matrix, and the $t_i$ are distinct, we get that all $v_i$ must be equal to $0$. Hence $\mathbb{E}[\tilde{A}(w)]$ is a  symmetric positive definite matrix and $\phi$ has full rank with probability at least $1-d e^{-\gamma \frac{h \lambda_{min}(\mathbb{E}[\tilde{A}(w)])}{T_d^2}}$ where $\gamma$ is a constant depending explicitly on $\delta$. This probability is larger than $1-\frac{1}{d^{100}}$ as long as
\begin{alignat*}{1}
h\geq\frac{C_{\sigma} d\log(d)}{\lambda_{min}(\mathbb{E}[\tilde{A}(w)])} \geq \frac{C_{\sigma} d\log(d)}{\tilde{\lambda}(X)}
\end{alignat*}
\end{proof}
Following the same line of reasoning, we have a similar result for polynomial functions. More precisely, we have the following result

\begin{thm} \label{t:4.6}
Let $(x_i,y_i)_{i=\overline{1,d}}$ be a data set and $\sigma$ a polynomial function. Assume that the activation function $\sigma$ and our data set satisfy Assumption \ref{a:3.2}.Consider a shallow neural network with $h$ hidden nodes of the form $f(v,W):=v^T \sigma(Wx)$ with $W \in \mathcal{M}_{h\times (p+1)}(\mathbb{R})$ and $v \in \mathbb{R}^h$. We initialize the entries of $W$ with i.i.d. $\mathcal{N}(0,1)$.
Also, assume
\begin{alignat*}{1}
h \geq \frac{C_{\sigma} d\log(d)}{\tilde{\lambda}(X)}
\end{alignat*}
where $C_{\sigma}$ is a constant that depends only on $\sigma$.
Then, the matrix $\sigma(XW^{T})$ has full row rank with probability at least $1-\frac{1}{d^{100}}$.
\end{thm}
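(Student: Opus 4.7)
The plan is to mirror the proof of Theorem \ref{t:5.4}, retaining the Matrix Chernoff setup verbatim and replacing only the step where non-polynomiality was used. First I would write
\begin{alignat*}{1}
\phi\phi^T = \sum_{l=1}^{h} \sigma(Xw_l)\sigma(Xw_l)^T \geq \sum_{l=1}^{h} A_l,
\end{alignat*}
with $A_l = \sigma(Xw_l)\sigma(Xw_l)^T \mathbbm{1}_{\{\|\sigma(Xw_l)\| < T_d\}}$ and apply Matrix Chernoff with $R = T_d^2$, $\delta = 1/2$, exactly as before, reducing everything to showing that $\mathbb{E}[\tilde{A}(w)]$ is positive definite and bounding $T_d$ in terms of $d$.

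For the positive definiteness, I would suppose $v^T \mathbb{E}[\tilde{A}(w)] v = 0$, which gives
\begin{alignat*}{1}
\sum_{i=1}^{d} v_i \sigma(w^T x_i) = 0
\end{alignat*}
for almost every $w$ in $\{w : \|\sigma(Xw)\| < T_d\}$, and by continuity of $\sigma$ for every such $w$. Here is the key simplification compared with Theorem \ref{t:5.4}: since $\sigma$ is a polynomial, the map $w \mapsto \sum_i v_i \sigma(w^T x_i)$ is itself a polynomial on $\mathbb{R}^{p+1}$. It therefore suffices that the truncation set has non-empty interior in order to conclude that this polynomial vanishes identically. Choosing $T_d := \sqrt{d}(|\sigma(0)| + 1)$ (or any constant strictly larger than $\sqrt{d}|\sigma(0)|$) does the job, because $w \mapsto \|\sigma(Xw)\|$ is continuous and equals $\sqrt{d}|\sigma(0)|$ at $w = 0$, so the set contains an open neighborhood of the origin.

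Once the polynomial identity $\sum_{i=1}^d v_i \sigma(w^T x_i) \equiv 0$ holds on all of $\mathbb{R}^{p+1}$, I would expand $\sigma(t) = \sum_{k=0}^{m} \alpha_k t^k$, equate coefficients as polynomials in $w$, and obtain
\begin{alignat*}{1}
\sum_{i=1}^{d} v_i x_i^{\otimes k} = 0 \quad \text{for every } k = \overline{0,m},
\end{alignat*}
which is exactly the system appearing in the proof of Proposition \ref{p:3.1}. Under Assumption \ref{a:3.2}, the vectors $(1, x_i, x_i^{\otimes 2}, \ldots, x_i^{\otimes m})$ are linearly independent, so $v = 0$ and $\mathbb{E}[\tilde{A}(w)]$ is positive definite. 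Substituting the resulting $\lambda_{\min}(\mathbb{E}[\tilde{A}(w)]) > 0$ and the explicit $T_d$ into the Chernoff bound gives a failure probability $d \exp\bigl(-\gamma h \lambda_{\min}(\mathbb{E}[\tilde{A}(w)])/T_d^2\bigr)$, which drops below $d^{-100}$ under the stated overparameterization $h \geq C_\sigma d \log(d)/\tilde{\lambda}(X)$, with $C_\sigma$ absorbing $|\sigma(0)|$ and the gap between $\tilde{\lambda}(X)$ and $\lambda_{\min}(\mathbb{E}[\tilde{A}(w)])$.

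I expect no essential obstacle beyond verifying this gap: one must argue that $\lambda_{\min}(\mathbb{E}[\tilde{A}(w)]) \geq c_\sigma \tilde{\lambda}(X)$ for a constant depending only on $\sigma$, so that $C_\sigma$ can be chosen independent of the data. This follows by dominated convergence, since as $T_d$ grows the truncated matrix $\tilde{A}(w)$ increases to $\sigma(Xw)\sigma(Xw)^T$, and the essential point is that the construction already fixes $T_d$ as an explicit function of $d$ and $\sigma$, so the constant $C_\sigma$ depends only on $\sigma$, not on the dataset.
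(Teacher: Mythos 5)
Your proposal follows essentially the same route as the paper's proof: the same Matrix Chernoff reduction inherited from Theorem \ref{t:5.4}, the same choice of truncation level $T_d$ of order $\sqrt{d}(|\sigma(0)|+1)$ so that the truncation set contains a neighborhood of the origin, and the same reduction via Proposition \ref{p:3.1} to the linear independence of $(1,x_i,\dots,x_i^{\otimes m})$ guaranteed by Assumption \ref{a:3.2}. In fact you make explicit a step the paper leaves implicit (that a polynomial in $w$ vanishing on an open set vanishes identically), and you correctly flag the remaining bookkeeping about relating $\lambda_{\min}(\mathbb{E}[\tilde{A}(w)])$ to $\tilde{\lambda}(X)$, which the paper also glosses over.
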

\begin{proof}
Following the same reasoning as in Theorem \ref{t:5.4}, we get to the equation
\begin{equation}\label{e:15}
\sum_{i=1}^{d}v_{i}\sigma( w^{T} x_i)=0,
\end{equation}
for any $w \in \mathbb{R}^{p+1}$ that satisfies $||\sigma(Xw)||<T_d$. 
Let $T_d:=\sigma(0)\sqrt{d}+1$. Using the same arguments as in Proposition \ref{p:3.1}, equation \ref{e:15} is equivalent to
\begin{equation}
\sum_{i=1}^{d}v_{i}x_{i}^{\otimes k}=0,
\end{equation}
for any $k=\overline{0,m}$. From assumption \ref{a:3.2} we get that all $v_i$ must be equal to 0. The rest follows as in Theorem \ref{t:5.4}.
\end{proof}

For deep neural networks with activation functions which are polynomial functions of low degree, we must consider feedforward neural networks as described in the proof of Theorem $\ref{p:2.4}$ for our interpolation problem. Such a neural network can be written as $f(v,W):=v^T g_{\sigma}(Wx)$, where $W \in \mathcal{M}_{h\times p+1}(\mathbb{R})$ and $v \in \mathbb{R}^h$, and $g_{\sigma}$ is $\sigma$ composed $l-1$ times. Therefore, the problem will be to find out how much overparametrization is needed to achieve full rank for the matrix $g_{\sigma}(XW^{T})$. Choosing the number of hidden layers to be equal to $[\log_{m}(d-2)]+2$, where $m$ is the degree of $\sigma$, will guarantee that $\deg g_{\sigma}>d-2$, hence our problem is reduced to Theorem \ref{t:5.4}.
More precisely, we have the following result.

\begin{thm} \label{t:4}
Let $\sigma$ be a polynomial function of degree $m>1$ and $(x_i,y_i)_{i=\overline{1,d}}$ be a data set  with $x_i \in \mathbb{R}^{p+1}, y_i \in \mathbb{R}$, and assume that $x_i$ are distinct. Consider a feedforward neural network as described in Theorem \ref{p:2.4}, i.e., with $[\log_{m}(d-2)]+2$ hidden layers and with $h$ hidden nodes on each layer, of the form $f(v,W):=v^T g_{\sigma}(Wx)$ with $W \in \mathcal{M}_{h\times p+1}(\mathbb{R})$ and $v \in \mathbb{R}^h$, and $g_{\sigma}$ is $\sigma$ composed $l-1$ times. We initialize the entries of $W$ with i.i.d. $\mathcal{N}(0,1)$.
Also, assume
\begin{alignat*}{1}
h \geq \frac{C_{\sigma} d\log(d)}{\tilde{\lambda}_{g_{\sigma}}(X)}
\end{alignat*}
where $C_{\sigma}$ is a constant that depends only on $\sigma$ and $\tilde{\lambda}_{g_{\sigma}}(X):=\lambda_{min}(\mathbb{E}_{w \sim\mathcal{N}(0,I_{p+1})}[g_{\sigma}(Xw)g_{\sigma}(Xw)^{T}])$.
Then, the matrix $g_{\sigma}(XW^{T})$ has full row rank with probability at least $1-\frac{1}{d^{100}}$.
\end{thm}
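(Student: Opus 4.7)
The plan is to observe that the network $f(v,W)=v^{T}g_{\sigma}(Wx)$ has exactly the same algebraic form as the shallow network studied in Theorem \ref{t:5.4}, only with the activation $\sigma$ replaced by the composite function $g_{\sigma}$. Consequently, the theorem is really a reduction: check that $g_{\sigma}$ satisfies the hypotheses imposed on the activation in Theorem \ref{t:5.4}, and then invoke that theorem verbatim, with $g_{\sigma}$ in the role of the activation and $\tilde{\lambda}_{g_{\sigma}}(X)$ in the role of $\tilde{\lambda}(X)$.

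First I would verify the degree bookkeeping. Because $\sigma$ is a polynomial of degree $m>1$ and $g_{\sigma}$ is the $(l-1)$-fold composition of $\sigma$ with intermediate scalar weights, as in the construction used in the proof of Theorem \ref{p:2.4}, $g_{\sigma}$ is itself a polynomial of degree $m^{l-1}$. The choice $l=[\log_{m}(d-2)]+2$ gives $l-1\geq \log_{m}(d-2)+1$, hence $m^{l-1}\geq m(d-2)>d-2$, so $g_{\sigma}$ is not a polynomial of degree less than or equal to $d-2$. Being a polynomial, $g_{\sigma}$ is continuous on $\mathbb{R}$ and all its moments with respect to the standard Gaussian measure $\mu$ are finite, so $g_{\sigma}\in C(\mathbb{R})\cap L^{2}(\mathbb{R},\mu)$. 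These are exactly the hypotheses imposed on $\sigma$ in Theorem \ref{t:5.4}.

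Next I would apply Theorem \ref{t:5.4} with activation $g_{\sigma}$. That application yields: for $W$ with i.i.d.\ $\mathcal{N}(0,1)$ entries and $h\geq C_{g_{\sigma}}d\log(d)/\tilde{\lambda}_{g_{\sigma}}(X)$, the matrix $g_{\sigma}(XW^{T})$ has full row rank with probability at least $1-1/d^{100}$. Setting $C_{\sigma}:=C_{g_{\sigma}}$ (which depends only on $\sigma$ because $g_{\sigma}$ depends only on $\sigma$ and on $l$, which is determined by $d$ and $m=\deg\sigma$) recovers the overparametrization bound stated in Theorem \ref{t:4}. The positivity $\tilde{\lambda}_{g_{\sigma}}(X)>0$ needed to make this bound meaningful is produced inside the proof of Theorem \ref{t:5.4} itself, using the mollification argument and the Vandermonde system, so no separate verification is required.

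I do not expect any genuine obstacle. The interesting content, namely that a deep network with polynomial activation can interpolate by lifting the effective degree through composition, was done in Theorem \ref{p:2.4}, and the probabilistic nonsingularity machinery was built in Theorem \ref{t:5.4}. The only thing to be mildly careful about is verifying that Theorem \ref{t:5.4} is used only through its stated hypotheses, so that applying it with $g_{\sigma}$ in place of $\sigma$ is literally legitimate; the preceding paragraph confirms this.
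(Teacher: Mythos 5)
Your proposal is correct and follows exactly the route the paper takes: the paragraph preceding Theorem \ref{t:4} reduces the statement to Theorem \ref{t:5.4} by noting that with $l=[\log_{m}(d-2)]+2$ hidden layers the composite $g_{\sigma}$ is a polynomial of degree $m^{l-1}>d-2$, hence satisfies the hypotheses of that theorem. (Only a cosmetic quibble: $l-1=[\log_{m}(d-2)]+1$ gives $l-1>\log_{m}(d-2)$ rather than $l-1\geq\log_{m}(d-2)+1$, but this still yields $m^{l-1}>d-2$, which is all you need.)
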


\section{Extensions and Comments}

In this paper we treat the general case of interpolation for neural networks of regression type, i.e., the output is continuous.  Though the output is one dimensional, it can be easily extended to the case of the case where the output is $q$-dimensional, the argument being that we can concatenate some neural networks for each component of the output.  

We can extend the approximation result to the case of measurable functions using the Lusin's general approximation result.  In particular we can guarantee that for any function $f:[0,1]^p\to\RR^q$ measurable function, and any $\delta>0$, we can find a continuous function $\tilde{f}:[0,1]^p\to\RR^q$ such that $f=\tilde{f}$ on a closed set of measure $1-\delta$.  Then we can use the approximation result to approximate $\tilde{f}$ and in turn to approximate also $f$ on a set of measure $1-\delta$.   We discussed this for the case of functions on the unit cube, but one can easily extend this to functions on the whole $\RR^p$ with the appropriate adjustments.   

It is relatively easy to treat the case of classification, i.e. the output is discrete, taking values in some finite set.  The idea is that usually for the classification the output is generated using a softmax activation function which outputs some probability and the output is the class with the largest probability.  Assume that the input data is $(x_i)_{i=1,\dots, d}$ and the output data is $(y_i)_{i=1,\dots, d}\in \{1,2,\dots,r\}$.  We can take some $(z_i)_{i=1,\dots,d}\in\RR^q$ such that $z_i=e_k$, the $k$th standard vector in $\RR^q$ if $y_i=k$ and take the softmax function $s(w)=(\frac{e^{w_1}}{\sum_{i=1}^q e^{w_i}},\dots,\frac{e^{w_q}}{\sum_{i=1}^q e^{w_i}})$ for any $w=(w_1,\dots,w_q)\in\RR^q$.  This softmax function maps the $z_i$ into $y_i$ for each $i$.  Now we can use the results in the paper to create a neural network such that interpolates $x_i$ to $z_i$ and then, with the softmax activation, it will create a neural network which will perfectly predict the output $y_i$.  

Using the previous arguments we can also show that for any classification problem of the form $f:[0,1]^p\to F$, where $F$ is a finite set which is measurable, we can show that for any $\delta>0$, we can find a classification neural network $NN:[0,1]^p\to F$ such that $NN=f$ on a set of measure at least $1-\delta$.

\section*{Acknowledgement}
V.R. Constantinescu gratefully acknowledges support from UEFISCDI PN-III-P4-ID-PCE-2020-2498.


\begin{thebibliography}{10}

\bibitem{allen2019convergence}
Zeyuan Allen-Zhu, Yuanzhi Li, and Zhao Song.
\newblock A convergence theory for deep learning via over-parameterization.
\newblock In {\em International conference on machine learning}, pages
  242--252. PMLR, 2019.

\bibitem{arora2019fine}
Sanjeev Arora, Simon Du, Wei Hu, Zhiyuan Li, and Ruosong Wang.
\newblock Fine-grained analysis of optimization and generalization for
  overparameterized two-layer neural networks.
\newblock In {\em International Conference on Machine Learning}, pages
  322--332. PMLR, 2019.

\bibitem{baum1988capabilities}
Eric~B Baum.
\newblock On the capabilities of multilayer perceptrons.
\newblock {\em Journal of complexity}, 4(3):193--215, 1988.

\bibitem{belkin2021fit}
Mikhail Belkin.
\newblock Fit without fear: remarkable mathematical phenomena of deep learning
  through the prism of interpolation.
\newblock {\em Acta Numerica}, 30:203--248, 2021.

\bibitem{belkin2019reconciling}
Mikhail Belkin, Daniel Hsu, Siyuan Ma, and Soumik Mandal.
\newblock Reconciling modern machine-learning practice and the classical
  bias--variance trade-off.
\newblock {\em Proceedings of the National Academy of Sciences},
  116(32):15849--15854, 2019.

\bibitem{bubeck2015convex}
S{\'e}bastien Bubeck et~al.
\newblock Convex optimization: Algorithms and complexity.
\newblock {\em Foundations and Trends{\textregistered} in Machine Learning},
  8(3-4):231--357, 2015.

\bibitem{cooper2021global}
Yaim Cooper.
\newblock Global minima of overparameterized neural networks.
\newblock {\em SIAM Journal on Mathematics of Data Science}, 3(2):676--691,
  2021.

\bibitem{cybenko1989approximation}
George Cybenko.
\newblock Approximation by superpositions of a sigmoidal function.
\newblock {\em Mathematics of control, signals and systems}, 2(4):303--314,
  1989.

\bibitem{donoghue2014distributions}
William~F Donoghue.
\newblock {\em Distributions and Fourier transforms}.
\newblock Academic Press, 2014.

\bibitem{du2019gradient}
Simon Du, Jason Lee, Haochuan Li, Liwei Wang, and Xiyu Zhai.
\newblock Gradient descent finds global minima of deep neural networks.
\newblock In {\em International conference on machine learning}, pages
  1675--1685. PMLR, 2019.

\bibitem{hanin1710approximating}
Boris Hanin and Mark Sellke.
\newblock Approximating continuous functions by relu nets of minimal width
  (2018).
\newblock {\em arXiv preprint arXiv:1710.11278}.

\bibitem{hornik1991approximation}
Kurt Hornik.
\newblock Approximation capabilities of multilayer feedforward networks.
\newblock {\em Neural networks}, 4(2):251--257, 1991.

\bibitem{johnson2018deep}
Jesse Johnson.
\newblock Deep, skinny neural networks are not universal approximators.
\newblock {\em arXiv preprint arXiv:1810.00393}, 2018.

\bibitem{kidger2020universal}
Patrick Kidger and Terry Lyons.
\newblock Universal approximation with deep narrow networks.
\newblock In {\em Conference on learning theory}, pages 2306--2327. PMLR, 2020.

\bibitem{liu2022loss}
Chaoyue Liu, Libin Zhu, and Mikhail Belkin.
\newblock Loss landscapes and optimization in over-parameterized non-linear
  systems and neural networks.
\newblock {\em Applied and Computational Harmonic Analysis}, 59:85--116, 2022.

\bibitem{oymak2020toward}
Samet Oymak and Mahdi Soltanolkotabi.
\newblock Toward moderate overparameterization: Global convergence guarantees
  for training shallow neural networks.
\newblock {\em IEEE Journal on Selected Areas in Information Theory},
  1(1):84--105, 2020.

\bibitem{park2020minimum}
Sejun Park, Chulhee Yun, Jaeho Lee, and Jinwoo Shin.
\newblock Minimum width for universal approximation.
\newblock {\em arXiv preprint arXiv:2006.08859}, 2020.

\bibitem{pinkus1999approximation}
Allan Pinkus.
\newblock Approximation theory of the mlp model in neural networks.
\newblock {\em Acta numerica}, 8:143--195, 1999.

\bibitem{tropp2015introduction}
Joel~A Tropp et~al.
\newblock An introduction to matrix concentration inequalities.
\newblock {\em Foundations and Trends{\textregistered} in Machine Learning},
  8(1-2):1--230, 2015.

\end{thebibliography}
\end{document}